\def\eqref#1{equation~\ref{#1}}
\def\1{\bm{1}}
\DeclareMathAlphabet{\mathsfit}{\encodingdefault}{\sfdefault}{m}{sl}
\SetMathAlphabet{\mathsfit}{bold}{\encodingdefault}{\sfdefault}{bx}{n}
 \definecolor{citecolor}{HTML}{0071bc}
\title{Bridging Draft Policy Misalignment: Group Tree Optimization  for Speculative Decoding}
\newtheorem{theorem}{Theorem}
\newtheorem{lemma}{Lemma}
\author{%
  Shijing Hu\\
  Fudan University\\
  \texttt{sjhu24@m.fudan.edu.cn} \\
  \And 
  Jingyang Li\\
  National University of Singapore\\
  \texttt{li\_jingyang@u.nus.edu} \\
  \And
  Zhihui Lu\thanks{Corresponding Author}\\
  Fudan University\\
  \texttt{lzh@fudan.edu.cn} \\
  \And
  Pan Zhou\\
  Singapore Management University\\
  \texttt{panzhou@smu.edu.sg} \\
}
\author{
Shijing Hu$^1$ \hspace{5em} Jingyang Li$^2$ \hspace{5em} Zhihui Lu$^1$\thanks{Corresponding author.} \hspace{5em} Pan Zhou$^{3}$ \\
$^1$Fudan University \hspace{0.5em} $^2$National University of Singapore \hspace{0.5em} $^3$ Singapore Management University \\
\texttt{sjhu24@m.fudan.edu.cn} \hspace{1em} \texttt{li\_jingyang@u.nus.edu} \hspace{1em} 
\texttt{lzh@fudan.edu.cn} \\ 
\hspace{14em} \texttt{panzhou@smu.edu.sg}
}
\begin{document}

\maketitle

\vspace{-6mm}

\begin{abstract}
	Speculative decoding accelerates large language model (LLM) inference by letting a lightweight draft model propose multiple tokens that the target model verifies in parallel. Yet existing training objectives optimize only a single greedy draft path, while decoding follows a \emph{tree} policy that re-ranks and verifies multiple branches. This \textit{draft policy misalignment} limits achievable speedups.  	
	We introduce \textbf{Group Tree Optimization} (GTO), which aligns training with the decoding-time tree policy through two components: (i) \emph{Draft Tree Reward}, a sampling-free objective equal to the expected acceptance length of the draft tree under the target model, directly measuring  decoding performance; (ii) \emph{Group-based Draft Policy Training}, a stable optimization scheme that contrasts trees from the current and a frozen reference draft model, forming debiased group-standardized advantages and applying a PPO-style surrogate along the longest accepted sequence for robust updates. We further prove that increasing our Draft Tree Reward provably improves acceptance length and speedup.  	
	Across dialogue (MT-Bench), code (HumanEval), and math (GSM8K), and multiple LLMs (e.g., LLaMA-3.1-8B, LLaMA-3.3-70B, Vicuna-1.3-13B, DeepSeek-R1-Distill-LLaMA-8B, Qwen3-8B), GTO increases acceptance length by \(7.4\%\) and yields an additional \(7.7\%\) speedup over prior state-of-the-art EAGLE-3. By \emph{bridging draft policy misalignment}, GTO offers a practical, general solution for efficient LLM inference. Code and draft models are available at \url{https://github.com/hsj576/GTO}.

\end{abstract}

\vspace{-6mm}

\section{Introduction}
\label{sec:intro}

\vspace{-2mm} 

Large language models (LLMs) like GPTs~\citep{achiam2023gpt} and LLaMAs~\citep{touvron2023llama, touvron2023llama2, dubey2024llama} have achieved remarkable success in dialogue~\citep{zheng2023judging}, coding~\citep{chen2021evaluating}, and reasoning~\citep{cobbe2021training}. Yet their standard autoregressive decoding remains inefficient: each token requires a full forward pass, making inference both compute-intensive and latency-bound. Speculative decoding~\citep{leviathan2023fast, chen2023accelerating} mitigates this by introducing a lightweight draft model to propose multiple tokens, which the target LLM verifies in parallel. This enables multi-token generation per target step, substantially reducing inference time.  

Recent work has improved speculative decoding by refining draft model training. For instance, HASS~\citep{zhang2024learning} enforces feature consistency to reduce hidden-state mismatches, GRIFFIN~\citep{hu2025griffin} resolves token-level misalignments, and EAGLE-3~\citep{li2025eagle} incorporates training-time rollouts to better mimic decoding. However, they face a fundamental limitation yet: \textbf{draft policy misalignment between training and decoding}. That is, the training objective of draft model does not align with how draft sequences are actually generated and used during decoding, ultimately weakening the effectiveness of training for improving decoding performance.

Specifically, during training, given a context, the draft model is optimized to maximize the likelihood of generating the same token as the target model~\citep{li2024eagle,li2024eagle2,li2025eagle,zhang2024learning}. It treats drafting as a \emph{single-path sequence prediction problem}, and  its corresponding optimal \textit{training-time draft policy} is a greedy drafting: select the highest-probability token at each draft  step to form a single draft sequence (e.g., the leftmost draft path in  Fig.~\ref{fig:observation} (a)). However,  the practice decoding differs from  greedy drafting, and indeed adopts \emph{tree drafting}~\citep{li2024eagle2}:  as shown in Fig.~\ref{fig:observation} (a), it uses  draft model to expand a draft tree containing multiple draft sequences, then re-ranks sequences using prediction confidences, and finally selects top-$g$ tokens which are then verified by the target LLM.  This decoding-time policy is fundamentally different: unlike the training-time policy focusing on a single greedy draft path (the most left one in  Fig.~\ref{fig:observation} (a)), it leverages multiple high-quality branches (the whole tree in Fig.~\ref{fig:observation} (a)) to maximize the expected acceptance length. 
 
This draft policy misalignment leads to two characteristic failure modes: (1) greedy path pruning; and (2) verification mismatch. For (1), due to re-ranking and top-$g$ selection, the optimal training-time greedy path may be pruned at decoding if sibling branches achieve higher overall confidence. For example, in Fig.~\ref{fig:observation}(a), the greedy sequence ``It is a'' (confidence 0.36) is discarded in favor of the sibling ``It has to'' (confidence 0.38).
Regarding (2), even when the greedy path survives pruning,  target model may accept a different branch, e.g., accepting ``It is the'' rather than the greedy ``It is a'' in Fig.~\ref{fig:observation}(b). In both cases,  training effort spent on the greedy path yields little decoding benefit. These failures also reveal a structural bottleneck: training encourages convergence to a policy that is effective and optimal only under single-path greedy drafting, but suboptimal for the tree-based strategy used in practice, causing training-decoding misalignment  and limiting decoding efficiency. Bridging this gap is therefore crucial for realizing the full potential of speculative decoding in LLMs.

We  empirically validate this misalignment using the EAGLE-3 draft model on LLaMA-3.1-8B~\citep{dubey2024llama}. As shown in \cref{fig:motivation}(a), 19–34\% of greedy paths are pruned during draft tree construction, and the finally accepted path matches the greedy one only 36–49\% of the time. Even when accepted, the greedy path averages $3\!-\!4$ tokens, shorter than the $5\!-\!6$ tokens of the full tree (\cref{fig:motivation}(b)). This confirms that greedy training overlooks globally optimal sequences, highlighting the severity of draft policy misalignment and its direct impact on speculative decoding efficiency.
 
 \begin{figure}[t]
 	\centering
 	\includegraphics[width=\textwidth]{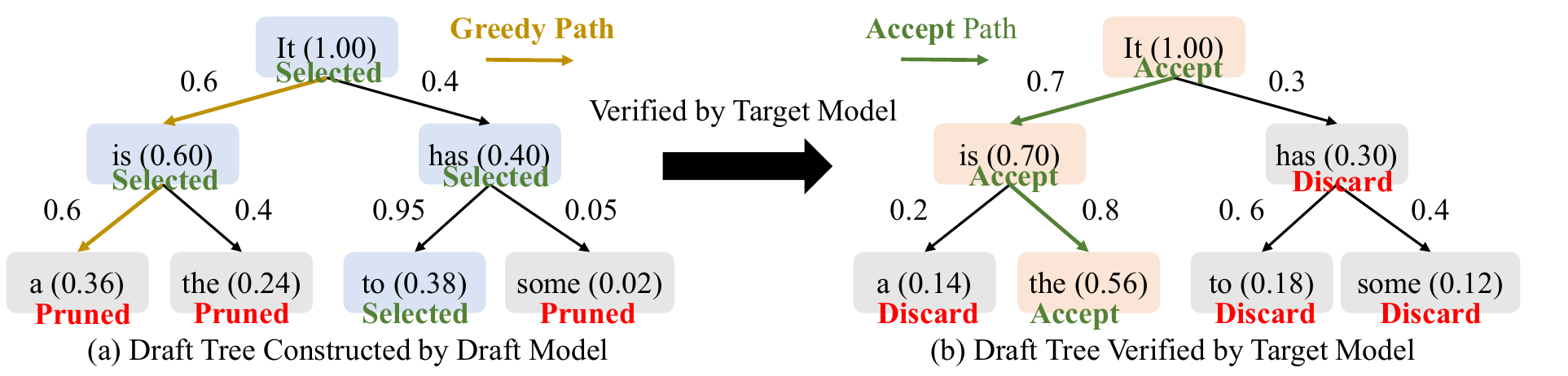}
 	\vspace{-1.2em}
 	\caption{Draft policy misalignment between training and decoding. 
 		\textbf{(a)} The  tree is built by draft model at decoding: number on edge is the token probability predicted by draft model, e.g., ``is" (0.6),  and number in parentheses is current path confidence, e.g., ``It is" (0.6=$1.0\times 0.6$).  Training enforces a training-time greedy draft policy, following the locally best child and yielding the path “It $\rightarrow$ is $\rightarrow$ a” (confidence 0.36). At decoding, top-$4$ re-ranking compares sibling paths, where ``It $\rightarrow$ has $\rightarrow$ to'' (0.38) outperforms the greedy branch which is thus pruned (red). Training signal concentrated on a single greedy path is wasted when sibling branches win.
 		\textbf{(b)} Target model verifies the tree with its own probabilities.  It compares the confidence of each sequence,  and accepts the sequence “It $\rightarrow$ is $\rightarrow$ the”. Even when the greedy branch survives,  target model may accept a different sibling.} 
 \label{fig:observation}
 \vspace{-2.0em}
 \end{figure}

\textbf{Contributions:} To address the draft policy misalignment, we propose Group Tree Optimization (GTO), a novel training algorithm for speculative decoding that explicitly optimizes the tree-based draft policy rather than a single greedy path. By aligning training with the actual decoding procedure, GTO ensures that draft models learn policies that directly improve decoding-time efficiency. 

First, we introduce a {draft-tree reward} that directly aligns training with the decoding-time policy. Unlike prior methods that optimize token-level accuracy~\citep{li2025eagle,hu2025griffin,zhang2024learning}, GTO adopts the same rollout strategy used during decoding: the draft model generates a tree of candidate sequences, which is then verified by the target LLM. We define the reward as the \emph{expected acceptance length} of the tree, a direct measure of decoding efficiency. This shifts the objective from ``predicting the next token correctly’’ to ``producing draft trees that survive verification and extend accepted prefixes as far as possible,’’ aligning the training goal with real decoding.  

Second, we develop a stable and effective draft policy  training  algorithm to maximize this draft-tree reward and thus boost decoding efficiency. Training is challenging because rewards are sparse, position-dependent, and high variance. GTO addresses this with a group-based approach tailored to deterministic draft-tree rollouts. We sample small groups of trees under both the current draft model and a frozen reference, and use their contrasts to construct debiased tree-level rewards that cancel position-specific difficulty. Within each group, standardized advantages normalize rewards across contexts, reducing variance and improving credit assignment by highlighting which branches truly drive longer accepted prefixes. Finally, we optimize a PPO-style clipped objective, defined over the likelihood ratio along the longest accepted sequence, ensuring robust and efficient training.  

 Finally, we validate GTO across dialogue (MT-Bench~\citep{zheng2023judging}), code (HumanEval~\citep{chen2021evaluating}), and reasoning (GSM8K~\citep{cobbe2021training}) benchmarks on LLaMA-3.1-8B, LLaMA-3.3-70B, DeepSeek-R1-Distill-LLaMA-8B, and Vicuna-13B. GTO consistently improves acceptance length by $7.4\%$ over EAGLE-3, translating into an additional $7.7\%$ speedup (\cref{fig:motivation} (c)).  

\begin{figure}[t]
	\centering
	\includegraphics[width=1.0\textwidth]{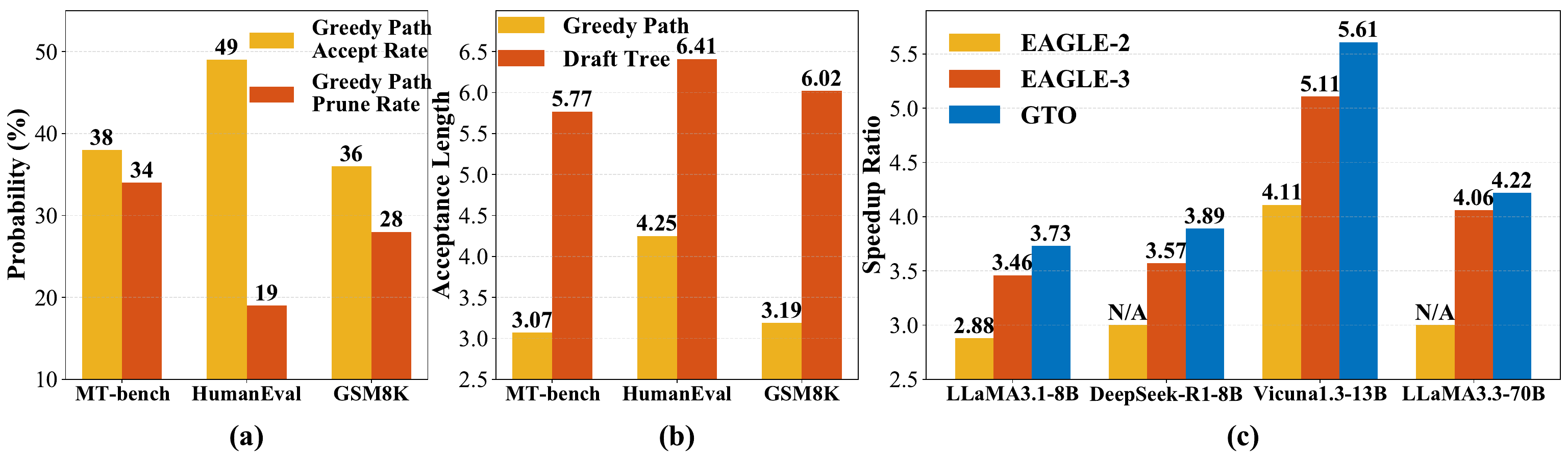}
	\vspace{-1.6em}
	\caption{Experimental Results of Draft Policy Misalignment between Training and Decoding.
		(a) Fraction of training-time greedy paths that are \emph{pruned} during draft tree construction (orange bars) and fraction where the \emph{accepted path coincides} the greedy path (yellow bars).
		(b) Accepted greedy paths are also \emph{shorter}: their average acceptance length is \(\mathbf{3\!-\!4}\) tokens, compared to \(\mathbf{5\!-\!6}\) for the entire draft tree.
        (c) Speedup Ratio Comparison of GTO and EAGLE-3. }
	\label{fig:motivation}
	\vspace{-1.5em}
\end{figure}

\vspace{-4mm}

\section{Related Work}

\vspace{-2mm}

Speculative decoding accelerates LLM inference by splitting each step into a lightweight \emph{draft} and a \emph{verification} stage \citep{sun2024spectr, miao2024specinfer, chen2023cascade, kim2024speculative, liu2023online}. Existing methods vary in how drafts are produced and verified: prompt- and retrieval-based approaches (PLD, Lookahead, CLLMs) improve draft quality but degrade with scarce context \citep{saxena2023prompt, fu2024break, kou2024cllms}; tree-based verification (Sequoia, SpecExec) boosts acceptance but often increases compute \citep{chen2024sequoia, svirschevski2024specexec}; REST and Ouroboros reuse outputs or databases but depend on resource quality \citep{he2023rest, zhao2024ouroboros}; hybrid designs (Chimera, Glide) partially integrate the target model at extra cost \citep{zeng2024chimera, du2024glide}. Efficiency-oriented drafters span Medusa, Hydra, and RNN/Transformer-based models such as EAGLE-3, with methods like HASS and GRIFFIN addressing feature- and token-level mismatches \citep{cai2024medusa, ankner2024hydra, cheng2024recurrent, li2024eagle, li2024eagle2, zhang2024learning, hu2025griffin, li2025eagle}.  
Despite these advances, a key limitation remains: \emph{draft policy misalignment}, where training optimizes a single greedy path but decoding verifies a \emph{tree} of candidates. We propose GTO to  align the training objective with the decoding-time tree policy, improving acceptance length and speedup. GTO complements existing methods and provides a general solution to policy mismatch in speculative decoding.

\vspace{-2mm}

\section{GTO: Group Tree Optimization}
\label{sec:gto}

\vspace{-2mm}

To address the \emph{draft policy misalignment} highlighted in \cref{sec:intro}, we introduce \emph{Group Tree Optimization} (GTO), a training framework that explicitly aligns the draft policy with decoding. The central idea is to evaluate and optimize draft policies not on a single greedy path, but on entire draft \emph{trees}, using the same drafting procedure deployed at decoding. To this end, GTO consists of two key components: (i) a \emph{draft-tree reward} that faithfully measures expected decoding performance in terms of accepted draft sequence  length (\cref{sec:gto-reward}), and (ii) a stable  group-based optimization algorithm for training with this reward (\cref{sec:gto-training}). Below we introduce them in turn. 

\subsection{Draft Tree Reward}
\label{sec:gto-reward}

\vspace{-1mm}

The effectiveness of speculative decoding is governed by the length of accepted draft sequence: the longer the draft sequence accepted by the target model, the fewer verification steps are needed, and thus the greater the decoding efficiency. With the same draft model, a higher expected acceptance length directly translates to higher speedup. This makes \emph{expected acceptance length} the most faithful measure of practical decoding performance when using the same draft model.

To capture this, GTO eliminates the traditional mismatch between training and decoding: instead of optimizing token-level proxies along a greedy path, we construct draft \emph{trees} during training using the same decoding-time expansion and pruning policy (e.g., EAGLE-2–style multi-branch expansion,  reranking and selection). The draft model is then optimized with respect to a \emph{tree-level reward} that directly reflects its expected decoding-time utility.

Formally, given a training prefix (a.k.a., context) \(\mathbf{x}_{1:t}\), we follow EAGLE-2, and  construct a depth-\(d\) draft tree \(\mathbf{T}_t\) with the draft model \(\mathcal{M}\):
\begin{equation}
	\mathbf{T}_t \;=\; \mathcal{G}(\mathcal{M}, \mathbf{x}_{1:t}),
\end{equation}
where \(\mathcal{G}\) denotes the decoding policy. 
The policy \(\mathcal{G}\) grows the tree in two stages.

\emph{(i) Layer-wise expansion.}
At depth \(\ell \in \{1,\ldots,d\}\), consider all frontier expansions (token edges) from the current layer. For each candidate expansion we compute a global acceptance score. We then select the top-\(k\) token expansions across the entire layer according to the global acceptance score and expand draft tree only on these children. This global competition allows promising siblings to outcompete locally greedy choices and prevents early commitment to a single path.

\emph{(ii) Global pruning and re-ranking.}
After reaching the maximum depth, we collect all leaves and re-rank them by the global acceptance score. We retain the top-\(g\) leaves and prune the rest. 

The tree consists of \(N\) candidate sequences \(\mathbf{T}_t = \{\mathbf{S}_{t,1}, \ldots, \mathbf{S}_{t,N}\}\), each of length \(l_i \leq d\) may be different due to selection (pruning):
\begin{equation}
	\mathbf{S}_{t,i} = \big\{ \bar{\mathbf{x}}_{t+1,i}, \ldots, \bar{\mathbf{x}}_{t+l_i,i} \big\},
\end{equation}
where $\big\{ \bar{\mathbf{x}}_{t+1,i}, \ldots, \bar{\mathbf{x}}_{t+l_i,i} \big\}$ denotes the draft sequence 	$\mathbf{S}_{t,i}$. Then, for each sequence, we define its \emph{expected acceptance length} under the target model \(\mathcal{T}\):
\begin{equation}
	\mathbf{L}_{t,i} = \sum_{j=1}^{l_i} \mathcal{P}\!\left(\bar{\mathbf{x}}_{t+j,i} \,\middle|\, \mathbf{x}_{1:t}, \bar{\mathbf{x}}_{t+1:t+j-1,i}\right),
\end{equation}
with
\begin{equation}
	\mathcal{P}\!\left(\bar{\mathbf{x}}_{t+j,i} \,\middle|\, \mathbf{x}_{1:t}, \bar{\mathbf{x}}_{t+1:t+j-1,i}\right)
	= \prod_{k=1}^{j} \mathcal{T}\!\left(\bar{\mathbf{x}}_{t+k,i} \,\middle|\, \mathbf{x}_{1:t}, \bar{\mathbf{x}}_{t+1:t+k-1,i}\right).
\end{equation}
Here, \(\mathbf{L}_{t,i}\) is the expectation of how many tokens of \(\mathbf{S}_{t,i}\) will be accepted by target model \(\mathcal{T}\). This definition is sampling-free,   while remaining directly tied to decoding performance.

Accordingly, we can average the expected acceptance length of all sequences in the tree to measure  the overall decoding performance of the tree. However, since decoding utility depends on which sequences (branches) survive pruning, we aggregate the sequence-level expectations with a smooth max (log-sum-exp), balancing differentiability with a focus on the strongest sequences:
\begin{equation}
\label{smooth-max}
	\mathbf{r}_t = \mathcal{R}(\mathbf{T}_t;\eta) 
	= \frac{1}{\eta}\,\log\!\Bigg(\sum_{i=1}^{N} \exp\!\big(\eta\, \mathbf{L}_{t,i}\big)\Bigg),
\end{equation}
where the temperature \(\eta>0\) interpolates between the maximum (\(\eta \to \infty\)) and the average (\(\eta \to 0\)) branch acceptance length. We set \(\eta=1\) in experiments, which yields a stable and informative reward and works very well in our all experiments. Ablation results in \cref{ablation-aggregation} show this strategy is better than average all expected length or use the maximum length.

By training the draft model to maximize \(\mathcal{R}(\mathbf{T}_t)\), GTO ensures that the draft \emph{policy} and training \emph{objective} are fully aligned with decoding. Unlike prior approaches that rely on token-level log-likelihoods or greedy-path proxies, GTO directly optimizes for the expected acceptance length that governs speculative decoding speedup.

\paragraph{Theoretical guarantee.} Importantly, improving the Draft Tree Reward provably increases the expected decoding acceptance length, regardless of the target model’s sampling temperature:

\begin{theorem}[Maximizing Draft Tree Reward Guarantees Improved Expected Acceptance Length]
	\label{thm:reward-to-acceptance}
	Consider a draft tree \(\mathbf{T}_t\) and target model temperature \(T \geq 0\). Let \(L^{\mathrm{dec}}_T(\mathbf{T}_t)\) denote the expected acceptance length at decoding. Then:
	\begin{enumerate}
		\item[\textnormal{(a)}] For \(T > 0\), if the draft tree reward \(\mathbf{r}_t\) increases, then \(\mathbb{E}[L^{\mathrm{dec}}_T(\mathbf{T}_t)]\) strictly increases.
		\item[\textnormal{(b)}] For \(T = 0\), if \(\mathbf{r}_t\) increases, then \(\mathbb{E}[L^{\mathrm{dec}}_0(\mathbf{T}_t)] = \max_i \mathbf{L}_{t,i}\) also increases.
	\end{enumerate}
\end{theorem}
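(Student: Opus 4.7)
The plan is to place three quantities on a common partial order induced by the per-branch expected acceptance lengths $\{\mathbf{L}_{t,i}\}_{i=1}^N$: the draft tree reward $\mathbf{r}_t$, each branch's contribution to decoding, and the aggregate $\mathbb{E}[L^{\mathrm{dec}}_T(\mathbf{T}_t)]$. I would show that all three are coordinate-wise non-decreasing (strictly so, in the cases we care about) in $\{\mathbf{L}_{t,i}\}$, so that any policy update which strictly raises $\mathbf{r}_t$ must strictly raise some $\mathbf{L}_{t,i}$, and therefore strictly raise the decoding-time acceptance length. Parts (a) and (b) then become two instances of this template, differing only in how the link from $\{\mathbf{L}_{t,i}\}$ to decoding is realized.

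For part (a), I would first identify $\mathbf{L}_{t,i}$ with the expected prefix-match length between branch $i$ and an i.i.d.\ draw from the (tempered) target distribution; this recovers the definition in the excerpt verbatim at $T=1$ and extends by a smooth tempering argument for general $T>0$. Since tree decoding accepts the longest prefix-match across the surviving branches, $\mathbb{E}[L^{\mathrm{dec}}_T(\mathbf{T}_t)] = \mathbb{E}[\max_i L_i]$ for random match lengths $L_i$ with $\mathbb{E}[L_i]=\mathbf{L}_{t,i}$. A standard stochastic-dominance argument then yields that $\mathbb{E}[\max_i L_i]$ is strictly increasing in each $\mathbf{L}_{t,i}$ whenever $T>0$, since the per-step acceptance distribution is non-degenerate. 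For the reward side, a direct computation gives
\[
\frac{\partial \mathbf{r}_t}{\partial \mathbf{L}_{t,i}} \;=\; \frac{\exp(\eta\, \mathbf{L}_{t,i})}{\sum_{j=1}^{N}\exp(\eta\, \mathbf{L}_{t,j})} \;>\; 0,
\]
so $\mathbf{r}_t$ is strictly monotone in each $\mathbf{L}_{t,i}$. Combining the two monotonicities closes (a).

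For part (b), at $T=0$ the target decodes greedily, each factor in $\mathcal{P}(\cdot)$ is in $\{0,1\}$, and $\mathbf{L}_{t,i}$ collapses to the deterministic prefix-match length of branch $i$ against $\arg\max \mathcal{T}$; the decoded length is then the longest such match, which gives the stated identity $\mathbb{E}[L^{\mathrm{dec}}_0(\mathbf{T}_t)] = \max_i \mathbf{L}_{t,i}$. The standard log-sum-exp bracket $\max_i \mathbf{L}_{t,i} \le \mathbf{r}_t \le \max_i \mathbf{L}_{t,i} + \eta^{-1}\log N$ shows that once $\mathbf{r}_t$ rises by more than the saturation gap $\eta^{-1}\log N$, the maximum must strictly rise. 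For sub-saturation updates, I would appeal to the softmax weights above, which place the dominant gradient mass on the top-ranked branch, so that a policy-gradient step driven by $\mathbf{r}_t$ raises the maximizing $\mathbf{L}_{t,i}$ in expectation.

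The main obstacle will be the policy-dependent reshuffling of the tree itself: the set $\{\mathbf{L}_{t,i}\}$, and even the branch count $N$, change as the draft policy changes because $\mathcal{G}$ applies top-$k$ expansion and top-$g$ pruning, so ``holding other coordinates fixed'' needs a coupling that fixes the candidate set before and after the update. I would introduce such a coupling by comparing matched trees sharing a common ancestor set and argue monotonicity within the coupled lattice. A secondary delicate point is the sub-saturation regime of (b): strictly, $\mathbf{r}_t$ can rise by lifting a non-maximal branch alone, so ``$\max_i \mathbf{L}_{t,i}$ also increases'' has to be read in the amortized policy-gradient sense made precise by the softmax-concentration argument. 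Cleanly formalizing this coupling and concentration step is the technical crux; once in place, the remaining monotonicity bookkeeping is routine.
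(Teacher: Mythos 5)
Your overall template is the same as the paper's: both arguments hinge on the fact that the log-sum-exp reward and the decoding-time acceptance length are coordinate-wise monotone in the branch quantities $\{\mathbf{L}_{t,i}\}$, so a strict increase in $\mathbf{r}_t$ forces an increase in some coordinate, which in turn lifts the acceptance length. The softmax gradient computation for $\partial \mathbf{r}_t/\partial \mathbf{L}_{t,i}$, the log-sum-exp bracket $\max_i \mathbf{L}_{t,i}\le \mathbf{r}_t\le \max_i \mathbf{L}_{t,i}+\eta^{-1}\log N$ for part (b), and the saturation-gap sufficient condition all appear in the paper's proof essentially verbatim. Your candid reading of part (b) --- that $\mathbf{r}_t$ can rise by lifting only sub-maximal branches, so the unconditional claim cannot hold and must be weakened --- is exactly what the paper concedes in its closing remark; its proof of (b) likewise only establishes the ``increase exceeding the slack'' sufficient condition.

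The one step in your plan that would fail as written is the stochastic-dominance argument in part (a). You propose to show that $\mathbb{E}[\max_i L_i]$ is strictly increasing in each $\mathbf{L}_{t,i}$, but the $\mathbf{L}_{t,i}$ are only the \emph{means} of the correlated match lengths $L_i$ (all measured against a single target rollout), and $\mathbb{E}[\max_i L_i]$ is not a function of those means alone --- two trees with identical $\{\mathbf{L}_{t,i}\}$ can have different $\mathbb{E}[\max_i L_i]$ depending on how the branches overlap. The paper avoids this by working at the event level: it writes $\mathbb{E}[L^{\mathrm{dec}}_T]=\sum_{j}\mathbb{P}_T(\mathcal{E}_j)$ with $\mathcal{E}_j=\bigcup_i \mathcal{B}_{i,j}$, where $\mathcal{B}_{i,j}$ is the event that the target rollout matches the first $j$ tokens of branch $i$, and argues that lengthening a branch adds new sets $\mathcal{B}_{i,j}$ of strictly positive probability (full softmax support for $T>0$) to the union. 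This is the equivalent tail-sum form of your $\mathbb{E}[\max_i L_i]$, but the monotonicity is established through the token sequences themselves rather than through the scalar summaries $\mathbf{L}_{t,i}$; you would need to restate your dominance step in these terms. Your observation that the candidate set and $N$ are themselves policy-dependent is a genuine issue that the paper simply assumes away by fixing the tree topology in its lemma, so flagging the need for a coupling is fair, but you should be aware that the reference proof does not actually supply one.
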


See its proof in Appendix~\ref{Appendix:proof}. This result establishes \emph{expected acceptance length} as the key link between training and decoding: optimizing the draft-tree reward directly improves speculative decoding efficiency in practice.

\subsection{Tree Reward Optimization}
\label{sec:gto-training}

\vspace{-1mm}

Directly optimizing the tree-level reward is challenging, particularly early in training when the draft model is weak and draft-token acceptance rates are low. In this regime, the tree reward is small and high-variance, making naive optimization inefficient and unstable. To address this, following  LLM's two-phase training (pretraining and fine-tuning),  GTO adopts a two-phase group-based approach: an optional warmup to obtain a competent draft model, followed by a structured group-wise optimization that stabilizes and accelerates training. This design improves sample efficiency and can skip the warmup if a strong pretrained draft model is available. For example,  in practice, we can directly use the draft model well trained by EAGLE-3, GRIFFIN and HASS as the reference draft model, which plays a role as the  Phase I training. 

\paragraph{Phase I: Draft model warmup.}
We first train a reference draft model \(\mathcal{M}_0\) using standard token-level objectives like the ones in EAGLE-3 and  GRIFFIN. This phase provides a baseline model to stabilize subsequent group-based updates and can be skipped when a sufficiently strong draft model exists, e.g., draft model well trained by EAGLE-3 and GRIFFIN.  

\paragraph{Phase II: Group-based optimization of the draft tree reward.}
We now optimize the draft tree reward while ensuring stability and robustness. Inspired by group-based reinforcement learning methods (e.g., GRPO \citep{shao2024deepseekmath}), we sample groups of related examples and use group-wise advantage estimation to reinforce high-performing samples while suppressing underperforming ones. However, unlike standard RL, for a fixed prefix \(\mathbf{x}_{1:t}\) the draft-tree generation \(\mathcal{G}(\mathcal{M}, \mathbf{x}_{1:t})\) is effectively deterministic given the policy, limiting the utility of multiple rollouts from the same state. To enable variance reduction and within-context comparisons, we form \emph{groups} from nearby positions in the same sequence and optimize a clipped likelihood-ratio surrogate with group-normalized advantages.

\textbf{Grouping.}
Let the training sequence be \(\mathbf{x}_{1:\mathbf{s}} = (x_1,\ldots,x_{\mathbf{s}})\), where \(\mathbf{s}\) is the sequence length.
We partition positions into \(K\) \emph{non-overlapping} groups of adjacent indices.
Each group is defined by a start index \(t_k\) and a fixed group size \(m\) (with \(m\in[4,8]\) in practice):
\begin{equation}\label{opt-group}
    \mathbf{G}^{(k)} \;=\; \{\, t_k,\, t_k+1,\, \ldots,\, t_k+m-1 \,\} \subseteq \{1,\ldots,\mathbf{s}\},
\end{equation}

subject to
\begin{equation}
    1 \le t_k \le \mathbf{s}-m+1,
    \qquad
    t_{k+1} \ge t_k + m \quad \text{(non-overlap)}.
\end{equation}

The number of groups \(K\) is determined by the available compute budget and the sequence length (upper bounded by \(\lfloor \mathbf{s}/m \rfloor\)).

For every position \(i \in \mathbf{G}^{(k)}\), we construct a depth-limited draft tree with the current draft model \(\mathcal{M}\) using the decoding policy \(\mathcal{G}\):
\begin{equation}
    \mathbf{T}_{i} \;=\; \mathcal{G}(\mathcal{M}, \mathbf{x}_{1:i}).
\end{equation}

By construction, indices within a group are adjacent: for any \(i,j \in \mathbf{G}^{(k)}\) we have \(|i-j| \le m-1\).
Consequently, the corresponding prefixes \(\mathbf{x}_{1:i}\) and \(\mathbf{x}_{1:j}\) differ by at most \(m-1\) trailing tokens and share a long common context.
Comparing tree-level rewards only \emph{within} a group therefore:
(i) matches examples under nearly identical contexts,
(ii) reduces variance in reward comparisons caused by position-specific difficulty, and
(iii) yields more reliable credit assignment across nearby prefixes.
Intuitively, we aggregate draft trees from adjacent prefixes so that the within-group differences are small, enabling stable and sample-efficient learning signals.

\textbf{Reward shaping and standardization.} 
A key challenge in draft tree reward optimization is that raw tree rewards \(\mathcal{R}(\mathbf{T}_i)\) exhibit \emph{systematic difficulty bias}: some prefixes \(\mathbf{x}_{1:i}\) are inherently harder to continue than others, leading to lower acceptance rates regardless of draft quality. For instance, prefixes ending with complex mathematical expressions or rare tokens may consistently yield shorter accepted sequences, while simple conversational prefixes may achieve high acceptance even with suboptimal drafts. This bias confounds the learning signal and can cause the model to avoid challenging contexts rather than improving on them.

To remove systematic difficulty bias across prefixes, we construct reference trees \(\bar{\mathbf{T}}_i = \mathcal{G}(\mathcal{M}_0, \mathbf{x}_{1:i})\) to debias the tree reward:
\begin{equation}
\label{debias-reward}
	\mathbf{R}_i = \mathcal{R}(\mathbf{T}_i) - \mathcal{R}(\bar{\mathbf{T}}_i),
\end{equation}
where \(\mathcal{R}\) is the tree-level reward from \cref{sec:gto-reward}. Within each group, rewards are standardized to stabilize updates:
\begin{equation}
	\mathcal{A}_i = \frac{\mathbf{R}_i - \mathrm{mean}(\{\mathbf{R}_j\}_{j \in \mathbf{G}^{(k)}})}{\mathrm{std}(\{\mathbf{R}_j\}_{j \in \mathbf{G}^{(k)}}) + \delta},
\end{equation}
with a small \(\delta>0\) for numerical stability. Our ablation study (\cref{ablation-debiasing}) demonstrates that without debiasing, the model training will becomes unstable due to high variance in gradient magnitudes, leading to worser performance in decoding.

\textbf{Clipped likelihood-ratio objective.}
Let \(\widehat{\mathbf{S}}_i\) be the longest accepted sequence in \(\mathbf{T}_i\) under \(\mathcal{T}\), with length \(l_i\).
Define a per-token likelihood ratio (geometric mean) between \(\mathcal{M}\) and \(\mathcal{M}_0\) on \(\widehat{\mathbf{S}}_i\):
\begin{equation}
	s_i \;=\; \exp\!\left(\frac{ \log \mathcal{M}\big(\widehat{\mathbf{S}}_i \,\big|\, \mathbf{x}_{1:i}\big) - \log \mathcal{M}_0\big(\widehat{\mathbf{S}}_i \,\big|\, \mathbf{x}_{1:i}\big)}{\max(l_i,1)}\right).
\end{equation}

We then optimize a PPO-style clipped surrogate over each group \citep{schulman2017proximal}:
\begin{equation}
	\mathcal{L}_{\mathrm{GTO}} \;=\; -\frac{1}{m} \sum_{i\in\mathbf{G}^{(k)}} \min\!\Big( s_i \cdot \mathcal{A}_i,\;
	\mathrm{clip}\!\big(s_i,\, 1-\epsilon,\, 1+\epsilon\big)\cdot \mathcal{A}_i \Big),
\end{equation}
where \(\mathrm{clip}(s, a, b)=\max\{a, \min\{s, b\}\}\) and \(\epsilon>0\) controls update magnitude.

\textbf{Overall training objective.}
We combine the group-tree objective with a token-level loss \(\mathcal{L}_{\mathrm{token}}\) using a scalar weight \(\omega\):
\begin{equation}
\label{eq:loss}
	\mathcal{L} \;=\; \mathcal{L}_{\mathrm{token}} \;+\; \omega \cdot \mathcal{L}_{\mathrm{GTO}}.
\end{equation}

 \(\mathcal{L}_{\mathrm{token}}\) denotes the token-level cross-entropy loss introduced in EAGLE-3~\citep{li2025eagle} that matches the draft model \(\mathcal{M}\) to the target model \(\mathcal{T}\) under the same prefixes.
 
This two-phase group-based procedure transforms the decoding-faithful draft tree reward into a stable and effective learning signal, enabling the draft model to reliably maximize expected acceptance length and align training with practical decoding performance. Details are summarized in Appendix.~\ref{Implementation Detail} and Algorithm~\ref{alg:gto}.

\vspace{-2mm}

\section{Experiment}
\label{sec:experiments}

\vspace{-2mm}

\textbf{Models \& datasets.} We test GTO on a representative set of LLMs, including LLaMA-3.1-Instruct-8B~\citep{touvron2023llama2}, LLaMA-3.3-Instruct-70B~\citep{touvron2023llama2}, Vicuna-1.3-13B~\citep{fan2025chinese}, DeepSeek-R1-Distill-LLaMA-8B~\citep{guo2025deepseek} and Qwen3-8B~\citep{yang2025qwen3}. All experiments are conducted on a single NVIDIA A100 80GB GPU, except for LLaMA-3.3-70B, which requires two GPUs. We benchmark performance on three widely used evaluation suites: multi-turn conversation (MT-Bench~\citep{zheng2023judging}), code generation (HumanEval~\citep{chen2021evaluating}), and mathematical reasoning (GSM8K~\citep{cobbe2021training}).

\textbf{Baselines \& implementations.} Vanilla autoregressive decoding serves as the baseline (speedup ratio = $1.00\times$).   For comparison, we include recent SoTA speculative decoding methods: SPS (with Vicuna-68M as draft)~\citep{leviathan2023fast}, PLD~\citep{saxena2023prompt}, Lookahead~\citep{fu2024break}, Medusa~\citep{cai2024medusa}, EAGLE~\citep{li2024eagle}, EAGLE-2~\citep{li2024eagle2}, HASS~\citep{zhang2024learning}, GRIFFIN~\citep{hu2025griffin}, and EAGLE-3~\citep{li2025eagle}. Whenever available, we rely on public implementations and strictly reproduce their decoding policies and hyperparameters.  

By default, GTO initializes its draft model from the one provided by EAGLE-3. To assess compatibility, we also experiment with draft models trained by other approaches (see Table~\ref{result-draft-model}). The initialized draft models are then fine-tuned with GTO on the ShareGPT dataset~\citep{vicuna2023}, except for the reasoning model DeepSeek-R1-Distill-LLaMA 8B, which is also fine-tuned on OpenThoughts-114k-math dataset~\citep{guha2025openthoughtsdatarecipesreasoning}. See additional training details for GTO  in Appendix~\ref{Implementation Detail}, and  details for the baselines in Appendix~\ref{Clarification of Baseline Methods}.

\textbf{Metrics.}  For fairness and  consistency, we follow priors, e.g., HASS, GRIFFIN, and EAGLE-3, and fix the batch size to 1 and evaluate under decoding temperatures $T \in \{0,1\}$. Same as prior works like EAGLE-3, GTO is  lossless and can preserve output quality. Thus, we focus on two efficiency metrics: (i) \textbf{Speedup Ratio} ($SR$) — the runtime acceleration relative to vanilla decoding, and (ii) \textbf{Acceptance Length} ($\tau$) — the average number of tokens accepted per draft-verification cycle.  

\begin{table}[t]
	\caption{Comparison of speedup ratio $SR$ and acceptance length $\tau$ on standard LLM benchmarks with temperature $T \in \{0, 1\}$.} 
\label{main-result}
\vskip 0.1in
\begin{center}
	\begin{small}
		\resizebox{1.00\columnwidth}{!}{
			\setlength{\tabcolsep}{3pt}
			\begin{tabular}{ll||cccccc|cc||cccccc|cc}
				\toprule
				&  & \multicolumn{8}{c||}{\textbf{Temperature = 0}} & \multicolumn{8}{c}{\textbf{Temperature = 1}} \\
				\cmidrule{3-18}
				\textbf{Model} & \textbf{Method} & \multicolumn{2}{c}{\textbf{MT-bench}} & \multicolumn{2}{c}{\textbf{HumanEval}} & \multicolumn{2}{c|}{\textbf{GSM8K}} & \multicolumn{2}{c||}{\textbf{Average}} & \multicolumn{2}{c}{\textbf{MT-bench}} & \multicolumn{2}{c}{\textbf{HumanEval}} & \multicolumn{2}{c|}{\textbf{GSM8K}} & \multicolumn{2}{c}{\textbf{Average}} \\
				\cmidrule{3-18}
				& & $SR \uparrow$ & $\tau \uparrow$  & $SR \uparrow$ &  $\tau \uparrow$  & $SR \uparrow$ &  $\tau \uparrow$ & $SR \uparrow$ &  $\tau \uparrow$ & $SR \uparrow$ &  $\tau \uparrow$ & $SR \uparrow$ &  $\tau \uparrow$ & $SR \uparrow$ &  $\tau \uparrow$ & $SR \uparrow$ &  $\tau \uparrow$ \\
				\midrule
				\multirow{7}{*}{\begin{tabular}[c]{@{}c@{}}LLaMA-3.1 \\ Instruct \\ 8B\end{tabular}} 
				& PLD & 1.53 & 1.61 & 1.69 & 1.73 & 1.79 & 1.85 & 1.67 & 1.73 & \multicolumn{8}{c}{\multirow{2}{*}{N/A, since the acceptance conditions are relaxed}} \\
				& Lookahead & 1.61 & 1.67 & 1.72 & 1.78 & 1.84 & 1.93 & 1.72 & 1.79 \\
				& EAGLE & 1.73 & 2.97 & 2.43 & 3.26 & 2.04 & 3.06 & 2.07 & 3.10 & 1.62 & 2.39 & 1.97 & 3.08 & 1.92 & 2.89 & 1.84 & 2.79 \\
				& EAGLE-2 & 2.52 & 4.02 & 3.31 & 4.70 & 2.83 & 4.21 & 2.89 & 4.31 & 2.04 & 3.13 & 2.62 & 4.37 & 2.37 & 3.71 & 2.34 & 3.74 \\
				& GRIFFIN & 2.95 & 4.68 & 3.73 & 5.90 & 3.15 & 5.16 & 3.28 & 5.25 & 2.29 & 3.90 & 3.24 & 5.39 & 2.66 & 4.67 & 2.73 & 4.65 \\
				& EAGLE-3 & 3.27 & 5.77 & 3.68 & 6.41 & 3.41 & 6.02 & 3.46 & 6.07 & 2.37 & 4.51 & 3.07 & 5.73 & 2.88 & 5.37 & 2.77 & 5.20 \\
				& \textbf{GTO} & \textbf{3.44} & \textbf{6.15} & \textbf{4.17} & \textbf{6.95} & \textbf{3.59} & \textbf{6.47} & \textbf{3.73} & \textbf{6.52} & \textbf{2.49} & \textbf{4.70} & \textbf{3.17} & \textbf{5.92} & \textbf{3.02} & \textbf{5.75} & \textbf{2.89} & \textbf{5.46} \\
				\midrule
				\multirow{7}{*}{\begin{tabular}[c]{@{}c@{}}Vicuna-1.3 \\ 13B\end{tabular}}
				& SPS & 1.91 & 2.24 & 2.18 & 2.52 & 1.74 & 2.00 & 1.94 & 2.25 & 1.59 & 1.81 & 1.73 & 1.99 & 1.47 & 1.75 & 1.60 & 1.85 \\
				& Medusa & 1.99 & 2.48 & 2.37 & 2.74 & 2.18 & 2.61 & 2.18 & 2.61 & \multicolumn{8}{c}{\multirow{2}{*}{N/A, since the acceptance conditions are relaxed}} \\
				& Hydra  & 2.57 & 3.25 & 3.02 & 3.68 & 2.61 & 3.37 & 2.73 & 3.43 \\
				& EAGLE & 2.81 & 3.67 & 3.23 & 4.12 & 2.74 & 3.62 & 2.93 & 3.80 & 2.14 & 3.06 & 2.48 & 3.46 & 2.35 & 3.37 & 2.32 & 3.30 \\
				& EAGLE2 & 3.79 & 4.78 & 4.71 & 5.37 & 3.83 & 4.72 & 4.11 & 4.96 & 3.47 & 4.33 & 3.84 & 4.87 & 3.15 & 4.36 & 3.49 & 4.52 \\
				& EAGLE-3 & 4.84 & 6.59 & 5.61 & 7.33 & 4.87 & 6.48 & 5.11 & 6.80 & 4.03 & 5.64 & 4.61 & 6.36 & 4.16 & 5.79 & 4.27 & 5.93 \\
				& \textbf{GTO} & \textbf{5.23} & \textbf{7.01} & \textbf{6.06} & \textbf{7.95} & \textbf{5.55} & \textbf{6.92} & \textbf{5.61} & \textbf{7.29} & \textbf{4.10} & \textbf{5.71} & \textbf{4.77} & \textbf{6.52} & \textbf{4.90} & \textbf{6.05} & \textbf{4.59} & \textbf{6.09} \\
				\midrule
				\multirow{5}{*}{\begin{tabular}[c]{@{}c@{}}DeepSeek-R1 \\ Distill-LLaMA \\ 8B\end{tabular}}
                & PLD & 1.34 & 1.42 & 1.53 & 1.62 & 1.48 & 1.54 & 1.45 & 1.53 & \multicolumn{8}{c}{\multirow{2}{*}{N/A, since the acceptance conditions are relaxed}} \\
				& Lookahead & 1.52 & 1.61 & 1.64 & 1.71 & 1.62 & 1.68 & 1.59 & 1.67 \\
                & GRIFFIN & 2.71 & 4.24 & 3.19 & 5.23 & 3.42 & 5.58 & 3.11 & 5.02 & 2.38 & 3.93 & 2.83 & 4.68 & 3.13 & 5.23 & 2.78 & 4.61 \\
				& EAGLE-3 & 3.34 & 5.32 & 3.59 & 5.88 & 3.78 & 6.16 & 3.57 & 5.79 & 2.71 & 4.54 & 3.15 & 5.10 & 3.49 & 5.82 & 3.11 & 5.15 \\
				& \textbf{GTO} & \textbf{3.49} & \textbf{5.60} & \textbf{3.98} & \textbf{6.58} & \textbf{4.20} & \textbf{6.92} & \textbf{3.89} & \textbf{6.37} & \textbf{2.76} & \textbf{4.59} & \textbf{3.34} & \textbf{5.44} & \textbf{3.71} & \textbf{6.50} & \textbf{3.27} & \textbf{5.51} \\
				\midrule
				\multirow{4}{*}{\begin{tabular}[c]{@{}c@{}}LLaMA-3.3 \\ Instruct \\ 70B\end{tabular}}
                & PLD & 1.43 & 1.51 & 1.58 & 1.67 & 1.52 & 1.61 & 1.51 & 1.60 & \multicolumn{8}{c}{\multirow{2}{*}{N/A, since the acceptance conditions are relaxed}} \\
				& Lookahead & 1.58 & 1.66 & 1.71 & 1.79 & 1.73 & 1.82 & 1.67 & 1.76 \\
				& EAGLE-3 & 3.78 & 5.40 & 4.41 & 6.26 & 3.99 & 5.90 & 4.06 & 5.85 & 3.68 & 5.18 & 4.05 & 5.85 & 3.88 & 5.65 & 3.87 & 5.56 \\
				& \textbf{GTO} & \textbf{3.97} & \textbf{5.56} & \textbf{4.68} & \textbf{6.51} & \textbf{4.11} & \textbf{6.25} & \textbf{4.22} & \textbf{6.14} & \textbf{3.90} & \textbf{5.34} & \textbf{4.21} & \textbf{6.20} & \textbf{4.07} & \textbf{5.82} & \textbf{4.06} & \textbf{5.78} \\
                \midrule
				\multirow{2}{*}{\begin{tabular}[c]{@{}c@{}}Qwen-3 \\ 8B\end{tabular}}
				& EAGLE-3 & 2.37 & 4.53 & 2.56 & 4.88 & 2.59 & 5.02 & 2.50 & 4.81 & 2.24 & 4.23 & 2.32 & 4.77 & 2.51 & 4.84 & 2.36 & 4.61 \\
                & \textbf{GTO} & \textbf{2.49} & \textbf{4.84} & \textbf{2.78} & \textbf{5.37} & \textbf{2.88} & \textbf{5.51} & \textbf{2.72} & \textbf{5.24} & \textbf{2.29} & \textbf{4.37} & \textbf{2.49} & \textbf{5.03} & \textbf{2.63} & \textbf{5.17} & \textbf{2.47} & \textbf{4.86} \\
				\bottomrule
			\end{tabular}
		}
	\end{small}
\end{center}
\vskip -0.1in
\end{table}

\subsection{Main Results}

\vspace{-1mm}

\textbf{Comparison with SoTAs.}  We report the acceptance lengths ($\tau$) and speedup ratios ($SR$) of GTO and all baselines  across three benchmarks in Table~\ref{main-result}. One can observe that GTO consistently outperforms all baselines, including SoTA EAGLE-3, across all datasets, models, and temperature settings. On average, each GTO drafting–verification cycle accepts 6–7 tokens, compared to 5–6 tokens for EAGLE-3. As a result,  in terms of tangible wall-clock speedups, GTO improves the runner-up EAGLE-3 by 7.7\% for temperature zero and 5.6\% for temperature one in an average across four evaluation models, while preserving the lossless property of speculative decoding.  

Specifically, on the multi-turn conversation benchmark (MT-Bench), GTO achieves steady gains across all models. For example,  with LLaMA-3.1 8B at $T{=}0$, GTO improves the speedup ratio by 5.2\% over EAGLE-3, and by 5.1\% at $T{=}1$. Vicuna-1.3 13B shows even larger gains, reaching 8.1\% at $T{=}0$ and 1.7\% at $T{=}1$.  For code generation (HumanEval), the improvements are more pronounced. With LLaMA-3.1 8B, GTO yields a 13.3\% speedup increase at $T{=}0$ and 3.3\% at $T{=}1$. DeepSeek-R1 8B follows the same trend, achieving 10.9\% and 6.0\% improvements at $T{=}0$ and $T{=}1$, respectively. These results highlight the effectiveness of GTO’s tree-based optimization for structured generation tasks such as coding.  On mathematical reasoning (GSM8K), GTO again surpasses EAGLE-3 across all configurations. For instance, with DeepSeek-R1 8B, GTO delivers an 11.1\% speedup improvement at $T{=}0$ and 6.3\% at $T{=}1$. The strong results on GSM8K suggest that GTO’s draft-tree reward effectively captures sequential reasoning patterns critical for mathematical problem solving.  

The results across diverse tasks and models highlight the versatility and robustness of GTO. The consistent improvements over the SoTA EAGLE-3, even at different temperatures, underscore GTO's effectiveness in handling varying levels of stochasticity in token predictions. Notably, the performance gains are more pronounced at temperature $T=0$ across most settings, suggesting that GTO's deterministic tree optimization particularly benefits greedy decoding scenarios.

\textbf{Compatibility evaluation.}  To further test compatibility and transferability, we evaluate GTO with draft models not initialized by EAGLE-3. Specifically, we fine-tune the draft models from two efficient speculative decoding methods—GRIFFIN and HASS—using GTO, and evaluate them under identical configurations on LLaMA-3-Instruct-8B and LLaMA-2-Chat-7B.  
 
As shown in Table~\ref{result-draft-model}, both \textbf{GRIFFIN+GTO} and \textbf{HASS+GTO} achieve consistent gains over their baselines. At $T{=}0$, GRIFFIN+GTO improves the average speedup ratio (\(SR\)) and acceptance length (\(\tau\)) by 7.8\% and 7.4\%, respectively, while HASS+GTO improves them by 8.3\% and 8.0\%. At $T{=}1$, GRIFFIN+GTO increases \(SR\) and \(\tau\) by 5.0\% and 5.6\%, and HASS+GTO by 4.6\% and 5.8\%.  These results validate GTO’s compatibility and transferability across distinct draft backbones, establishing it as a general and effective approach for bridging the training-decoding tree-policy misalignment.

\begin{table}[t]
	\caption{Comparison of speedup ratio ($SR$) and acceptance length ($\tau$) when respectively using  draft models trained by GRIFFIN and HASS as initialization of GTO.} 
\label{result-draft-model}
\begin{center}
\begin{small}
\resizebox{1.00\columnwidth}{!}{
\setlength{\tabcolsep}{5pt}
\begin{tabular}{ll||cccccc|cc||cccccc|cc}
        \toprule
        &  & \multicolumn{8}{c||}{\textbf{Temperature = 0}} & \multicolumn{8}{c}{\textbf{Temperature = 1}} \\
				\cmidrule{3-18}
				\textbf{Model} & \textbf{Method} & \multicolumn{2}{c}{\textbf{MT-bench}} & \multicolumn{2}{c}{\textbf{HumanEval}} & \multicolumn{2}{c|}{\textbf{GSM8K}} & \multicolumn{2}{c||}{\textbf{Average}} & \multicolumn{2}{c}{\textbf{MT-bench}} & \multicolumn{2}{c}{\textbf{HumanEval}} & \multicolumn{2}{c|}{\textbf{GSM8K}} & \multicolumn{2}{c}{\textbf{Average}} \\
        \cmidrule{3-18}
        & & $SR \uparrow$ & $\tau \uparrow$  & $SR \uparrow$ &  $\tau \uparrow$  & $SR \uparrow$ &  $\tau \uparrow$ & $SR \uparrow$ &  $\tau \uparrow$ & $SR \uparrow$ &  $\tau \uparrow$ & $SR \uparrow$ &  $\tau \uparrow$ & $SR \uparrow$ &  $\tau \uparrow$ & $SR \uparrow$ &  $\tau \uparrow$ \\
        \midrule
        \multirow{4}{*}{\begin{tabular}[c]{@{}c@{}}LLaMA-3 \\Instruct \\ 8B\end{tabular}}
        & GRIFFIN & 3.09 & 4.85 & 3.65 & 5.97 & 3.30 & 5.31 & 3.35 & 5.38 & 2.62 & 4.35 & 3.31 & 5.62 & 3.07 & 5.08 & 3.00 & 5.02 \\
        & \textbf{GTO} & \textbf{3.28} & \textbf{5.17} & \textbf{4.03} & \textbf{6.44} & \textbf{3.53} & \textbf{5.73} & \textbf{3.61} & \textbf{5.78} & \textbf{2.74} & \textbf{4.54} & \textbf{3.47} & \textbf{5.95} & \textbf{3.23} & \textbf{5.41} & \textbf{3.15} & \textbf{5.30} \\
        \cmidrule{2-18}
        & HASS & 2.75 & 4.63 & 3.51 & 5.70 & 3.09 & 5.06 & 3.12 & 5.13 & 2.41 & 4.15 & 3.09 & 5.41 & 2.92 & 4.90 & 2.81 & 4.82 \\
        & \textbf{GTO} & \textbf{2.95} & \textbf{4.96} & \textbf{3.86} & \textbf{6.19} & \textbf{3.33} & \textbf{5.47} & \textbf{3.38} & \textbf{5.54} & \textbf{2.54} & \textbf{4.36} & \textbf{3.23} & \textbf{5.69} & \textbf{3.06} & \textbf{5.25} & \textbf{2.94} & \textbf{5.10} \\
        \midrule
        \multirow{4}{*}{\begin{tabular}[c]{@{}c@{}}LLaMA-2\\Chat \\ 7B\end{tabular}}
        & GRIFFIN & 3.12 & 5.11 & 3.61 & 5.93 & 3.10 & 5.27 & 3.28 & 5.44 & 2.81 & 4.81 & 3.33 & 5.63 & 3.06 & 5.26 & 3.07 & 5.23 \\
        & \textbf{GTO} & \textbf{3.34} & \textbf{5.51} & \textbf{3.82} & \textbf{6.26} & \textbf{3.27} & \textbf{5.56} & \textbf{3.48} & \textbf{5.78} & \textbf{2.97} & \textbf{5.12} & \textbf{3.54} & \textbf{5.98} & \textbf{3.24} & \textbf{5.62} & \textbf{3.25} & \textbf{5.57} \\
        \cmidrule{2-18}
        & HASS & 2.97 & 4.97 & 3.46 & 5.69 & 3.06 & 5.12 & 3.17 & 5.26 & 2.72 & 4.64 & 3.18 & 5.22 & 2.83 & 5.08 & 2.91 & 4.98 \\
        & \textbf{GTO} & \textbf{3.13} & \textbf{5.15} & \textbf{3.64} & \textbf{5.95} & \textbf{3.15} & \textbf{5.31} & \textbf{3.31} & \textbf{5.47} & \textbf{2.84} & \textbf{4.82} & \textbf{3.41} & \textbf{5.69} & \textbf{3.09} & \textbf{5.34} & \textbf{3.11} & \textbf{5.28} \\
        \bottomrule
    \end{tabular}
}
\end{small}
\end{center}
\vspace{-6mm}
\end{table}

\subsection{Ablation Study} 
\label{sec:ablation}

\vspace{-1mm}

\paragraph{Aggregation Operator.}

We ablate the aggregation operator in the Draft Tree Reward (Sec.~\ref{sec:gto-reward}) on LLaMA-3.1-Instruct-8B. Our method employs the \emph{smooth maximum} via log-sum-exp (LSE), which preserves differentiability while emphasizing strong branches (\cref{smooth-max}). We compare against two alternatives under identical settings: (i) \emph{Sum (Average)}: \(\mathbf{r}^{\mathrm{sum}}_t = \frac{1}{N}\sum_{i=1}^N \mathbf{L}_{t,i}\), treating all branches equally;
(ii) \emph{Max}: \(\mathbf{r}^{\max}_t = \max_i \mathbf{L}_{t,i}\), focusing only on the best branch but non-smooth. 

\begin{table}[t]
\caption{Ablation of draft tree reward aggregation on LLaMA-3.1 8B.}
\label{ablation-aggregation}
\begin{center}
\begin{small}
\resizebox{1.00\columnwidth}{!}{
\setlength{\tabcolsep}{5pt}
\begin{tabular}{l||cccccc|cc||cccccc|cc}
        \toprule
        & \multicolumn{8}{c||}{\textbf{Temperature = 0}} & \multicolumn{8}{c}{\textbf{Temperature = 1}} \\
				\cmidrule{2-17}
				\textbf{Method} & \multicolumn{2}{c}{\textbf{MT-bench}} & \multicolumn{2}{c}{\textbf{HumanEval}} & \multicolumn{2}{c|}{\textbf{GSM8K}} & \multicolumn{2}{c||}{\textbf{Average}} & \multicolumn{2}{c}{\textbf{MT-bench}} & \multicolumn{2}{c}{\textbf{HumanEval}} & \multicolumn{2}{c|}{\textbf{GSM8K}} & \multicolumn{2}{c}{\textbf{Average}} \\
        \cmidrule{2-17}
        & $SR \uparrow$ & $\tau \uparrow$  & $SR \uparrow$ &  $\tau \uparrow$  & $SR \uparrow$ &  $\tau \uparrow$ & $SR \uparrow$ &  $\tau \uparrow$ & $SR \uparrow$ &  $\tau \uparrow$ & $SR \uparrow$ &  $\tau \uparrow$ & $SR \uparrow$ &  $\tau \uparrow$ & $SR \uparrow$ &  $\tau \uparrow$ \\
        \midrule
        \textbf{GTO (LSE)} & \textbf{3.44} & \textbf{6.15} & \textbf{4.17} & \textbf{6.95} & \textbf{3.59} & \textbf{6.47} & \textbf{3.73} & \textbf{6.52} & \textbf{2.49} & \textbf{4.70} & \textbf{3.17} & \textbf{5.92} & \textbf{3.02} & \textbf{5.75} & \textbf{2.89} & \textbf{5.46} \\
        Max & 3.38 & 6.05 & 4.06 & 6.80 & 3.52 & 6.36 & 3.65 & 6.40 & 2.46 & 4.65 & 3.12 & 5.84 & 2.97 & 5.66 & 2.85 & 5.38 \\
        Sum (Average) & 3.29 & 5.92 & 3.95 & 6.62 & 3.42 & 6.18 & 3.55 & 6.24 & 2.41 & 4.56 & 3.04 & 5.72 & 2.90 & 5.55 & 2.78 & 5.28 \\
        \bottomrule
    \end{tabular}
}
\end{small}
\end{center}
\vspace{-7mm}
\end{table}

Across all benchmarks and decoding temperatures, LSE aggregation (GTO) attains the best speedup ratio ($SR$) and acceptance length ($\tau$). At $T{=}0$, GTO improves the average $SR$ by $2.1\%$ over Max and $4.8\%$ over Sum, with comparable gains in $\tau$. At $T{=}1$, the advantage remains, with $SR$ gains of $1.4\%$ over Max and $3.8\%$ over Sum, again accompanied by consistent improvements in $\tau$. 

These results highlight the trade-offs of alternative operators: \emph{Sum} dilutes signal by averaging weak branches, while \emph{Max} is brittle and non-smooth, overfitting to a single path with poor gradient coverage. In contrast, LSE interpolates between them, providing a stable and selective objective that better aligns with decoding-time re-ranking and pruning.

\paragraph{Group Size.}

We ablate the group size $m$ in Tree Reward Optimization (Sec.~\ref{sec:gto-training}) on LLaMA-3.1-Instruct 8B with $m \in \{1,4,8,16,32\}$. As shown in \cref{ablation-grouping}, the default $m{=}8$ of GTO achieves the best average $SR$ and $\tau$, while $m{=}4$ is within $<1\%$, indicating a stable plateau. In contrast, $m{=}1$ and $m{=}16$ show clear degradation, and $m{=}32$ performs worst. 

\begin{table}[t] 
\caption{Ablation of grouping size \(m\)  on LLaMA-3.1 8B.} 
\label{ablation-grouping}
\vspace{-2mm}
\begin{center}
\begin{small}
\resizebox{1.00\columnwidth}{!}{
\setlength{\tabcolsep}{5pt}
\begin{tabular}{l||cccccc|cc||cccccc|cc}
        \toprule
        & \multicolumn{8}{c||}{\textbf{Temperature = 0}} & \multicolumn{8}{c}{\textbf{Temperature = 1}} \\
				\cmidrule{2-17}
				\textbf{Method} & \multicolumn{2}{c}{\textbf{MT-bench}} & \multicolumn{2}{c}{\textbf{HumanEval}} & \multicolumn{2}{c|}{\textbf{GSM8K}} & \multicolumn{2}{c||}{\textbf{Average}} & \multicolumn{2}{c}{\textbf{MT-bench}} & \multicolumn{2}{c}{\textbf{HumanEval}} & \multicolumn{2}{c|}{\textbf{GSM8K}} & \multicolumn{2}{c}{\textbf{Average}} \\
        \cmidrule{2-17}
        & $SR \uparrow$ & $\tau \uparrow$  & $SR \uparrow$ &  $\tau \uparrow$  & $SR \uparrow$ &  $\tau \uparrow$ & $SR \uparrow$ &  $\tau \uparrow$ & $SR \uparrow$ &  $\tau \uparrow$ & $SR \uparrow$ &  $\tau \uparrow$ & $SR \uparrow$ &  $\tau \uparrow$ & $SR \uparrow$ &  $\tau \uparrow$ \\
        \midrule
        \(m=1\) & 3.32 & 5.94 & 4.02 & 6.71 & 3.47 & 6.25 & 3.60 & 6.30 & 2.40 & 4.54 & 3.06 & 5.71 & 2.91 & 5.55 & 2.79 & 5.27 \\
        \(m=4\) & 3.42 & 6.12 & 4.15 & 6.91 & 3.57 & 6.44 & 3.71 & 6.49 & 2.48 & 4.68 & 3.15 & 5.89 & 3.01 & 5.72 & 2.88 & 5.43 \\
        \textbf{\(m=8\) (GTO)} & \textbf{3.44} & \textbf{6.15} & \textbf{4.17} & \textbf{6.95} & \textbf{3.59} & \textbf{6.47} & \textbf{3.73} & \textbf{6.52} & \textbf{2.49} & \textbf{4.70} & \textbf{3.17} & \textbf{5.92} & \textbf{3.02} & \textbf{5.75} & \textbf{2.89} & \textbf{5.46} \\
        \(m=16\) & 3.27 & 5.84 & 3.96 & 6.60 & 3.41 & 6.15 & 3.55 & 6.20 & 2.37 & 4.47 & 3.01 & 5.62 & 2.87 & 5.46 & 2.75 & 5.18 \\
        \(m=32\) & 3.17 & 5.66 & 3.84 & 6.39 & 3.30 & 5.95 & 3.44 & 6.00 & 2.29 & 4.32 & 2.92 & 5.45 & 2.78 & 5.29 & 2.66 & 5.02 \\
        \bottomrule
    \end{tabular}
}
\end{small}
\end{center}
\vspace{-5mm}
\end{table}

Small groups (e.g., $m{=}1$) suffer from noisy, context-misaligned rewards, weakening credit assignment. Large groups (e.g., $m \ge 16$) span longer contexts, introducing drift and bias that hurt learning. Thus, moderate sizes ($m \in [4,8]$) strike the best balance between variance reduction and context alignment, yielding the most reliable gains in $SR$ and $\tau$.

These observations align with the theoretical insights from the GRPO~\citep{shao2024deepseekmath}, which shows that very small groups suffer from high-variance and unstable updates, while very large groups suffer from signal attenuation and slower convergence due to excessive averaging.
In our GTO experiments, we also observe this qualitative pattern. Group sizes in the range of 4–8 consistently provide the best balance: they significantly reduce variance relative to size 1, yet still preserve enough reward contrast to produce strong learning signals. Larger groups (e.g., size 32) remain stable but deliver noticeably weaker improvements due to the normalization-induced compression described above.

\paragraph{Reward Debiasing.}

We ablate the reward shaping and standardization step (\cref{debias-reward}) in Tree Reward Optimization on LLaMA-3.1-Instruct-8B. Debiasing computes a control-variated reward by subtracting the tree-level reward of a frozen reference draft model $\mathcal{M}_0$ (Phase~I) from the current model $\mathcal{M}$ for matched prefixes, reducing variance and improving credit assignment. We compare our default GTO (Debiased) against a variant that omits this subtraction (w/o Debiasing), with all other settings fixed.

\begin{table}[t]
 
\caption{Ablation of reward debiasing with a reference model  on LLaMA-3.1 8B.} 
\label{ablation-debiasing}
\vspace{-2mm}
\begin{center}
\begin{small}
\resizebox{1.00\columnwidth}{!}{
\setlength{\tabcolsep}{5pt}
\begin{tabular}{l||cccccc|cc||cccccc|cc}
        \toprule
        & \multicolumn{8}{c||}{\textbf{Temperature = 0}} & \multicolumn{8}{c}{\textbf{Temperature = 1}} \\
				\cmidrule{2-17}
				\textbf{Method} & \multicolumn{2}{c}{\textbf{MT-bench}} & \multicolumn{2}{c}{\textbf{HumanEval}} & \multicolumn{2}{c|}{\textbf{GSM8K}} & \multicolumn{2}{c||}{\textbf{Average}} & \multicolumn{2}{c}{\textbf{MT-bench}} & \multicolumn{2}{c}{\textbf{HumanEval}} & \multicolumn{2}{c|}{\textbf{GSM8K}} & \multicolumn{2}{c}{\textbf{Average}} \\
        \cmidrule{2-17}
        & $SR \uparrow$ & $\tau \uparrow$  & $SR \uparrow$ &  $\tau \uparrow$  & $SR \uparrow$ &  $\tau \uparrow$ & $SR \uparrow$ &  $\tau \uparrow$ & $SR \uparrow$ &  $\tau \uparrow$ & $SR \uparrow$ &  $\tau \uparrow$ & $SR \uparrow$ &  $\tau \uparrow$ & $SR \uparrow$ &  $\tau \uparrow$ \\
        \midrule
        \textbf{GTO (Debiased)} & \textbf{3.44} & \textbf{6.15} & \textbf{4.17} & \textbf{6.95} & \textbf{3.59} & \textbf{6.47} & \textbf{3.73} & \textbf{6.52} & \textbf{2.49} & \textbf{4.70} & \textbf{3.17} & \textbf{5.92} & \textbf{3.02} & \textbf{5.75} & \textbf{2.89} & \textbf{5.46} \\
        w/o Debiasing & 3.30 & 5.78 & 3.84 & 6.62 & 3.50 & 6.23 & 3.55 & 6.21 & 2.39 & 4.53 & 3.03 & 5.64 & 2.87 & 5.35 & 2.76 & 5.17 \\
        \bottomrule
    \end{tabular}
}
\end{small}
\end{center}
\vspace{-5mm}
\end{table}

As shown in \cref{ablation-debiasing}, debiasing consistently improves both $SR$ and $\tau$. At $T{=}0$, GTO achieves $+5.0\%$ $SR$ and $+5.1\%$ $\tau$ over w/o Debiasing; at $T{=}1$, the gains are $+5.6\%$ and $+4.7\%$. Without debiasing, rewards are noisier and context-dependent, yielding weaker draft policies and shorter acceptance lengths.

\begin{table}[t]
 
\caption{Ablation of continual training on draft model.} 
\label{ablation-continual}
\vspace{-2mm}
\begin{center}
\begin{small}
\resizebox{1.00\columnwidth}{!}{
\setlength{\tabcolsep}{5pt}
\begin{tabular}{l||cccccc|cc||cccccc|cc}
        \toprule
        & \multicolumn{8}{c||}{\textbf{Temperature = 0}} & \multicolumn{8}{c}{\textbf{Temperature = 1}} \\
				\cmidrule{2-17}
				\textbf{Method} & \multicolumn{2}{c}{\textbf{MT-bench}} & \multicolumn{2}{c}{\textbf{HumanEval}} & \multicolumn{2}{c|}{\textbf{GSM8K}} & \multicolumn{2}{c||}{\textbf{Average}} & \multicolumn{2}{c}{\textbf{MT-bench}} & \multicolumn{2}{c}{\textbf{HumanEval}} & \multicolumn{2}{c|}{\textbf{GSM8K}} & \multicolumn{2}{c}{\textbf{Average}} \\
        \cmidrule{2-17}
        & $SR \uparrow$ & $\tau \uparrow$  & $SR \uparrow$ &  $\tau \uparrow$  & $SR \uparrow$ &  $\tau \uparrow$ & $SR \uparrow$ &  $\tau \uparrow$ & $SR \uparrow$ &  $\tau \uparrow$ & $SR \uparrow$ &  $\tau \uparrow$ & $SR \uparrow$ &  $\tau \uparrow$ & $SR \uparrow$ &  $\tau \uparrow$ \\
        \midrule
        \textbf{GTO} & \textbf{3.49} & \textbf{5.60} & \textbf{3.98} & \textbf{6.58} & \textbf{4.20} & \textbf{6.92} & \textbf{3.89} & \textbf{6.37} & \textbf{2.76} & \textbf{4.59} & \textbf{3.34} & \textbf{5.44} & \textbf{3.71} & \textbf{6.50} & \textbf{3.27} & \textbf{5.51} \\
        EAGLE-3 & 3.34 & 5.32 & 3.59 & 5.88 & 3.78 & 6.16 & 3.57 & 5.79 & 2.71 & 4.54 & 3.15 & 5.10 & 3.49 & 5.82 & 3.11 & 5.15 \\
        continual training & 3.37 & 5.38 & 3.61 & 5.94 & 3.81 & 6.24 & 3.60 & 5.85 & 2.70 & 4.53 & 3.16 & 5.15 & 3.54 & 5.93 & 3.13 & 5.20 \\
        \bottomrule
    \end{tabular}
}
\end{small}
\end{center}
\vspace{-5mm}
\end{table}

\paragraph{Continual training.} 
To ensure that the performance gains of GTO stem from our proposed algorithmic improvements rather than the additional computational budget, we introduce a continual training baseline. Specifically, we further fine-tune the vanilla EAGLE-3 draft model for an additional 200 A100-80G GPU hours. This baseline employs the exact same training data and strictly follows the original EAGLE-3 training recipe for the DeepSeek-R1-Distill-LLaMA-8B target model.

As reported in Table~\ref{ablation-continual}, GTO achieves a consistently higher speedup (approximately 4\%--7\% on average) across various benchmarks and temperatures compared to this continued-training baseline. Furthermore, we observe that the continual training baseline performs almost identically to the vanilla EAGLE-3. This behavior is expected: because the training dataset is already encompassed within EAGLE-3's original training mixture, the draft model is already near convergence on this distribution. Without the introduction of novel data, allocating additional compute to the standard training objective merely reinforces an already-optimized model, yielding negligible improvements.

These findings confirm that the efficacy of GTO does not derive from simply extending training time or data exposure. Instead, the improvements are fundamentally algorithmic, rooted in GTO's ability to explicitly align the draft tree policy with the target acceptance dynamics, thereby directly resolving the policy misalignment that typically bottlenecks speculative decoding.

\section{Conclusion}
\vspace{-2mm}

In this paper, we proposed \textbf{Group Tree Optimization} (GTO) to bridge the draft policy misalignment between training and decoding. GTO introduces a decoding-faithful \emph{Draft Tree Reward} that directly optimizes the expected acceptance length and a stable \emph{group-based optimization} that contrasts current and reference trees, standardizes advantages across nearby contexts, and updates via a PPO-style clipped surrogate along the longest accepted sequence. 
Extensive evaluations across diverse LLMs and datasets show that GTO consistently outperforms SoTAs, achieving the highest speedup ratios and acceptance lengths.

\vspace{-2mm}
\paragraph{Limitations.}
GTO increases training-time compute due to its two-phase procedure and the need to construct and evaluate grouped draft trees during training. Nevertheless, GTO is \emph{model-agnostic} and complementary to existing speculative decoding methods: it can be directly fine-tuned on top of pretrained draft models (e.g., EAGLE-3, GRIFFIN) without architectural changes or modifications to the verification stack. In practice, the draft model is trained once, whereas decoding dominates the runtime in real-world deployments; the added training cost is therefore amortized by improved inference efficiency. In our experiments, GTO improves the speedup ratio by more than 7\% over EAGLE-3, making the extra training cost a reasonable trade-off for latency-sensitive applications.

\section*{Acknowledgement}
This work was supported by the Yangtze River Delta Science and Technology Innovation Community Joint Research Project (YDZX20233100004031) and the Singapore Ministry of Education (MOE) Academic Research Fund (AcRF) Tier 1 grant (Proposal ID: 25-SIS-SMU-003). Any opinions, findings and conclusions or recommendations expressed in this material are those of the author(s) and do not reflect the views of the Ministry of Education, Singapore.

\section*{Ethics Statement}
GTO improves \emph{efficiency} of large language model decoding. Nevertheless, faster generation could increase the throughput of undesirable content if deployed without safeguards. We recommend deploying GTO only with established safety measures (content filters, rate limiting, audit logging, and red-teaming) and within the original safety and usage policies of the underlying models. 

\bibliography{iclr2026_conference}
\bibliographystyle{iclr2026_conference}

\newpage
\appendix
\section{Proof of \cref{thm:reward-to-acceptance}}
\label{Appendix:proof}

We first make explicit the objects in play. Let the draft tree at step \(t\) have \(N\) branches (root-to-leaf paths) indexed by \(i \in [N]\). For each branch \(i\), let \(\mathbf{z}_{i,1:\ell_i}\) denote its token sequence up to depth \(\ell_i\), and let
\[
\mathbf{L}_{t,i} \in \{0,1,\dots,d\}
\]
denote the (random or deterministic) number of consecutive tokens, starting at the current prefix, that the target model would accept if branch \(i\) were proposed. The draft-tree reward is the smooth maximum
\[
\mathbf{r}_t \;=\; \frac{1}{\eta}\log\!\Big(\sum_{i=1}^{N} e^{\eta \mathbf{L}_{t,i}}\Big) \quad\text{with}\quad \eta>0,
\]
which satisfies the standard bounds
\[
\max_i \mathbf{L}_{t,i} \;\le\; \mathbf{r}_t \;\le\; \max_i \mathbf{L}_{t,i} + \frac{1}{\eta}\log N.
\tag{1}\label{eq:lse-bounds}
\]

For decoding, define for each \(j\ge 1\) the event
\[
\mathcal{E}_j(\mathbf{T}_t) \;=\; \{\text{at least } j \text{ tokens are accepted at decoding}\}.
\]
Then the expected acceptance length under target temperature \(T\) can be expressed as
\[
\mathbb{E}\!\left[L^{\mathrm{dec}}_T(\mathbf{T}_t)\right]
\;=\; \sum_{j=1}^{d} \mathbb{P}_T\!\left(\mathcal{E}_j(\mathbf{T}_t)\right).
\tag{2}\label{eq:len-as-sum}
\]

We will use the following elementary monotonicity fact.

\begin{lemma}[Coordinate-wise monotonicity of acceptance probability]
\label{lem:monotone}
Fix a draft tree topology and branch token sequences \(\{\mathbf{z}_{i,1:\ell_i}\}_{i=1}^N\). For any \(j\ge 1\), the event \(\mathcal{E}_j(\mathbf{T}_t)\) can be written as the union
\[
\mathcal{E}_j(\mathbf{T}_t) \;=\; \bigcup_{i=1}^{N} \mathcal{B}_{i,j}, 
\qquad
\mathcal{B}_{i,j} \;:=\; \{\text{the target rollout matches } \mathbf{z}_{i,1:j}\}.
\]
If we increase a single coordinate \(\mathbf{L}_{t,i}\) by \(\Delta\in\mathbb{N}\) (keeping other \(\mathbf{L}_{t,k}\) fixed), then for each \(j\in\{\mathbf{L}_{t,i}\!+\!1,\dots,\mathbf{L}_{t,i}\!+\!\Delta\}\),
the union gains a new set \(\mathcal{B}_{i,j}\) and hence
\[
\mathbb{P}_T\!\left(\mathcal{E}_j(\mathbf{T}_t)\right) \text{ is non-decreasing.}
\]
Moreover, if \(T>0\) (softmax sampling with strictly positive support over tokens), then \(\mathbb{P}_T(\mathcal{B}_{i,j})>0\) and thus \(\mathbb{P}_T\!\left(\mathcal{E}_j(\mathbf{T}_t)\right)\) increases \emph{strictly} for those \(j\).
\end{lemma}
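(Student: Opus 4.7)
The plan is to establish the set-theoretic union representation first, then derive the non-strict monotonicity directly from containment of events, and finally upgrade to strict inequality using the positive-support property of softmax sampling at $T>0$.

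First, I would verify the identity $\mathcal{E}_j(\mathbf{T}_t)=\bigcup_{i=1}^{N}\mathcal{B}_{i,j}$ directly from the semantics of speculative verification: the number of accepted tokens under a target rollout is the length of the longest branch prefix that the rollout matches, so $L^{\mathrm{dec}}_T\ge j$ holds iff the rollout agrees with $\mathbf{z}_{i,1:j}$ for at least one $i$. Branches with $\ell_i<j$ contribute only $\emptyset$ to the union and can be dropped without affecting the representation.

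Next, for the non-decreasing claim, I would observe that raising the coordinate $\mathbf{L}_{t,i}$ by $\Delta$ corresponds to extending branch $i$ so that it now supports $\Delta$ additional depths of acceptance. For each $j\in\{\mathbf{L}_{t,i}+1,\dots,\mathbf{L}_{t,i}+\Delta\}$, the contribution $\mathcal{B}_{i,j}$ that was vacuous before (since $j>\ell_i$) becomes a genuine event of the same form as its siblings. Consequently the new union $\mathcal{E}_j^{\mathrm{new}}=\mathcal{B}_{i,j}\cup\mathcal{E}_j^{\mathrm{old}}$ contains the old one, and monotonicity of probability under set inclusion yields the non-strict inequality immediately. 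For the strictness under $T>0$, I would invoke that a strictly positive softmax temperature assigns strictly positive conditional probability to every vocabulary token, so by the chain rule $\mathbb{P}_T(\mathcal{B}_{i,j})>0$ for the fixed token sequence $\mathbf{z}_{i,1:j}$; combined with $\mathcal{B}_{i,j}\not\subseteq\mathcal{E}_j^{\mathrm{old}}$, this gives $\mathbb{P}_T(\mathcal{E}_j^{\mathrm{new}})>\mathbb{P}_T(\mathcal{E}_j^{\mathrm{old}})$.

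The main obstacle I anticipate is securing the non-inclusion $\mathcal{B}_{i,j}\not\subseteq\mathcal{E}_j^{\mathrm{old}}$ required for strictness. It can fail precisely when another branch already shares the prefix $\mathbf{z}_{i,1:j}$, which would make $\mathcal{B}_{i,j}$ redundant and add no new probability mass. I would handle this by appealing to the canonical convention that a draft tree produced by the EAGLE-2-style layer-wise expansion has distinct root-to-leaf paths (distinct children at every internal node), so $\mathbf{z}_{i,1:j}$ is a unique path for any $j\le\ell_i$; then the rollout outcomes that realize exactly $\mathbf{z}_{i,1:j}$ form a set of positive $\mathbb{P}_T$-measure contained in $\mathcal{B}_{i,j}\setminus\mathcal{E}_j^{\mathrm{old}}$, which delivers the strict increment and completes the argument.
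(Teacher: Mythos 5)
Your proposal is correct and follows essentially the same route as the paper's proof sketch: the union decomposition $\mathcal{E}_j=\bigcup_i\mathcal{B}_{i,j}$, non-decreasing probability via event inclusion, strictness from the full support of softmax sampling at $T>0$, and an appeal to the tree structure to ensure the newly added prefix event is not already covered. You are in fact slightly more explicit than the paper about the one delicate point --- that $\mathcal{B}_{i,j}$ could be redundant if another branch shared the prefix $\mathbf{z}_{i,1:j}$ --- which the paper dispatches with its ``disjointness at the level of exact token sequences'' remark; your resolution via distinct children at each internal node is the same argument.
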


\begin{proof}[Proof sketch]
For each \(i\), the event \(\mathcal{B}_{i,j}\) corresponds to the target producing the specific \(j\)-token prefix \(\mathbf{z}_{i,1:j}\). Increasing \(\mathbf{L}_{t,i}\) by \(\Delta\) adds new prefixes at depths \(\mathbf{L}_{t,i}\!+\!1,\dots,\mathbf{L}_{t,i}\!+\!\Delta\), hence enlarging the union. Under \(T>0\), each concrete token sequence has strictly positive probability under a softmax LM, so the probability mass added is positive. Disjointness at the level of exact token sequences follows from the tree structure: no two distinct branches share the same length-\(j\) token prefix, so \(\mathcal{B}_{i,j}\) is not a subset of \(\bigcup_{k\neq i}\mathcal{B}_{k,j}\).
\end{proof}

We now prove the two cases in \cref{thm:reward-to-acceptance}.

\begin{proof}[Proof of \cref{thm:reward-to-acceptance}]
(a) \(T>0\). The function \(\mathbf{r}_t=\frac{1}{\eta}\log\!\big(\sum_i e^{\eta \mathbf{L}_{t,i}}\big)\) is strictly increasing in each coordinate \(\mathbf{L}_{t,i}\). Because \(\mathbf{L}_{t,i}\) are integer-valued lengths, any increase in \(\mathbf{r}_t\) implies that at least one coordinate \(\mathbf{L}_{t,i}\) increases by an integer \(\Delta\ge 1\).\footnote{Formally, along any path that increases \(\mathbf{r}_t\), the first time \(\mathbf{r}_t\) changes must coincide with an increment in at least one discrete coordinate.}
By \cref{lem:monotone}, for each newly covered depth \(j\in\{\mathbf{L}_{t,i}\!+\!1,\dots,\mathbf{L}_{t,i}\!+\!\Delta\}\) we have
\(\mathbb{P}_T(\mathcal{E}_j(\mathbf{T}_t))\) increases strictly (because \(T>0\) confers strictly positive mass on the corresponding prefix event).
Summing these strictly positive increases over \(j\) and possibly over multiple improved branches (if several coordinates increased) and invoking \eqref{eq:len-as-sum} yields
\[
\mathbb{E}\!\left[L^{\mathrm{dec}}_T(\mathbf{T}_t)\right] \quad\text{increases strictly whenever }\; \mathbf{r}_t \text{ increases.}
\]

(b) \(T=0\). Let \(\mathbf{s}^\star\) be the unique greedy target trajectory. Then \(\mathbf{L}_{t,i}\) equals the longest common-prefix length between branch \(i\) and \(\mathbf{s}^\star\), and
\[
\mathbb{E}\!\left[L^{\mathrm{dec}}_0(\mathbf{T}_t)\right] \;=\; \max_i \mathbf{L}_{t,i}.
\]
Using the smooth-max bounds \eqref{eq:lse-bounds} with \(M:=\max_i \mathbf{L}_{t,i}\), we have
\[
M \;\le\; \mathbf{r}_t \;\le\; M + \frac{1}{\eta}\log N.
\]
Consequently, if \(\mathbf{r}_t\) increases by more than the residual slack-to-plateau,
\[
\Delta \mathbf{r}_t \;>\; \Big(M + \frac{1}{\eta}\log N\Big) - \mathbf{r}_t,
\]
then the new reward \(\mathbf{r}_t'\) must satisfy \(\mathbf{r}_t' > M + \frac{1}{\eta}\log N\), which is impossible unless the new maximum increases to \(M' \ge M+1\). Hence, under \(T=0\),
\[
\mathbf{r}_t' - \mathbf{r}_t \;>\; \Big(M + \tfrac{1}{\eta}\log N\Big) - \mathbf{r}_t 
\;\;\Longrightarrow\;\;
\mathbb{E}\!\left[L^{\mathrm{dec}}_0(\mathbf{T}_t)\right] = \max_i \mathbf{L}_{t,i} \text{ strictly increases.}
\]
This gives a simple sufficient condition: an increase in \(\mathbf{r}_t\) that exceeds the softmax slack \(\frac{1}{\eta}\log N - (\mathbf{r}_t - M)\) necessarily raises the deterministic acceptance length.

\medskip
Putting (a) and (b) together, we obtain the stated guarantees: for \(T>0\), any increase in \(\mathbf{r}_t\) strictly increases the expected acceptance length; for \(T=0\), an increase in \(\mathbf{r}_t\) that exceeds the smooth-max slack forces an increase in \(\max_i \mathbf{L}_{t,i}\).
\end{proof}

\paragraph{Remarks.}
(i) The case \(T>0\) relies only on the strictly positive support of the target sampler; it holds for any softmax temperature \(T>0\) (or any sampler with full support). 
(ii) The sufficient condition in \(T=0\) is tight with respect to the standard smooth-max bounds \eqref{eq:lse-bounds}; no stronger implication can be made from \(\mathbf{r}_t\) alone because \(\mathbf{r}_t\) can increase by raising only sub-maximal branches without changing the maximum.

\section{Implementation Detail}
\label{Implementation Detail}

\subsection{Draft Tree Structure}

Across all experiments, we adopt a dynamic draft tree with a fixed budget of 60 draft tokens, a maximum tree depth of 7 and top-$k$ of 10, following the configuration shown to be effective in EAGLE-3.

\subsection{Token-Level Loss in \cref{eq:loss}}
\label{app:token-loss}

Let \(\mathcal{D}\) be the training corpus over a vocabulary \(\mathcal{V}\). For a sequence \(\mathbf{x}=(x_1,\ldots,x_{L}) \in \mathcal{D}\), denote the prefix \(\mathbf{x}_{1:i-1}=(x_1,\ldots,x_{i-1})\).
Let \(p_{\mathcal{T}}(\cdot \mid \mathbf{x}_{1:i-1})\) and \(p_{\mathcal{M}}(\cdot \mid \mathbf{x}_{1:i-1})\) be the next-token distributions produced by the target model \(\mathcal{T}\) and the draft model \(\mathcal{M}\), respectively, under the \emph{same} teacher-forced prefix.
We define the token-level loss as the expected cross-entropy from the teacher to the student:
\[
\mathcal{L}_{\mathrm{token}}
\;=\;
\mathbb{E}_{\mathbf{x}\sim \mathcal{D}}
\left[
\frac{1}{|\mathcal{I}(\mathbf{x})|}
\sum_{i\in \mathcal{I}(\mathbf{x})}
H\!\left(p_{\mathcal{T}}(\cdot \mid \mathbf{x}_{1:i-1}),\, p_{\mathcal{M}}(\cdot \mid \mathbf{x}_{1:i-1})\right)
\right],
\]
where \( \mathcal{I}(\mathbf{x}) \subseteq \{1,\ldots,L\} \) indexes supervised positions (e.g., all non-padding positions) and
\[
H(p,q) \;=\; - \sum_{v \in \mathcal{V}} p(v)\,\log q(v)
\]
is the cross-entropy. Equivalently, since \(H(p_{\mathcal{T}},p_{\mathcal{M}})=\mathrm{KL}(p_{\mathcal{T}}\Vert p_{\mathcal{M}}) + H(p_{\mathcal{T}})\) and \(H(p_{\mathcal{T}})\) does not depend on \(\mathcal{M}\), minimizing \(\mathcal{L}_{\mathrm{token}}\) is equivalent (up to an additive constant) to minimizing
\[
\mathbb{E}_{\mathbf{x}\sim \mathcal{D}}
\left[
\frac{1}{|\mathcal{I}(\mathbf{x})|}
\sum_{i\in \mathcal{I}(\mathbf{x})}
\mathrm{KL}\!\left(p_{\mathcal{T}}(\cdot \mid \mathbf{x}_{1:i-1}) \,\Vert\, p_{\mathcal{M}}(\cdot \mid \mathbf{x}_{1:i-1})\right)
\right].
\]

\subsection{Training Configuration}
We fine-tune the draft model with AdamW and a warmup–decay schedule under mixed precision and ZeRO optimizations. Key hyperparameters are summarized below:

\begin{itemize}
    \item Draft-tree construction: top-\(k\) for per-node expansion set to \(k=10\).
    \item Draft-tree reranking: top-\(g\) candidates per step set to \(g=60\).
    \item Smooth-max temperature in tree reward: \(\eta=1\).
    \item Number of groups per sequence: \(K=16\).
    \item Group size (prefixes per group): \(m=8\).
    \item Scalar weight on the GTO loss: \(\omega=0.5\) in \(\mathcal{L}=\mathcal{L}_{\mathrm{token}}+\omega\,\mathcal{L}_{\mathrm{GTO}}\).
\end{itemize}

\paragraph{Optimizer and scheduler.}
\begin{itemize}
    \item Optimizer: AdamW with \(\beta_1{=}0.9\), \(\beta_2{=}0.95\), weight decay \(=0\).
    \item Learning rate: Warm up linearly from \(0\) to \(5{\times}10^{-6}\) over 1{,}000 steps, then decay over a total of 60{,}000 steps.
    \item Gradient clipping: \(0.5\).
\end{itemize}

\paragraph{Precision and parallelism.}
\begin{itemize}
    \item Mixed precision: FP16 autocast with dynamic loss scaling (initial scale \(2^{14}\); window \(=1000\); hysteresis \(=2\); min scale \(=1\)).
    \item ZeRO: Stage-2 with overlapping communication, all-gather/reduce-scatter enabled; bucket sizes \(2{\times}10^{8}\).
    \item Gradient accumulation: 2 steps; per-GPU micro-batch size: 1.
\end{itemize}

\paragraph{Training loop.}
\begin{itemize}
    \item Epochs: 5
    \item max sequence length: 2048
    \item dataloader workers: 2
\end{itemize}

Additional hyperparameters and scripts are available at \url{https://github.com/hsj576/GTO}.

The full GTO update is summarized in Algorithm~\ref{alg:gto}.

\begin{algorithm}[ht]
	\caption{GTO Phase II: Group-based Optimization of Draft Tree Reward}
	\label{alg:gto}
	\begin{algorithmic}[1]
		\Require Draft model \(\mathcal{M}\), reference draft model \(\mathcal{M}_0\), target model \(\mathcal{T}\), group size \(m\), clip \(\epsilon\), std floor \(\delta\), reward aggregator \(\mathcal{R}\)
		\For{each minibatch of training sequences}
		\For{each sequence \(\mathbf{x}\) in batch}
		\State Sample \(\{\mathbf{G}^{(k)}\}_{k=1}^{K} \leftarrow \mathrm{SampleGroups}(\mathbf{x}, m)\) \Comment{\(\mathbf{G}^{(k)}=\{t_k,\ldots,t_k+m-1\}\)}
		\For{each group \(\mathbf{G}^{(k)}\)}
		\For{each \(i \in \mathbf{G}^{(k)}\)}
		\State Build trees: \(\mathbf{T}_i \leftarrow \mathcal{G}(\mathcal{M}, \mathbf{x}_{1:i})\), \(\bar{\mathbf{T}}_i \leftarrow \mathcal{G}(\mathcal{M}_0, \mathbf{x}_{1:i})\)
		\State Compute rewards: \(\mathbf{r}_i \leftarrow \mathcal{R}(\mathbf{T}_i)\), \(\bar{\mathbf{r}}_i \leftarrow \mathcal{R}(\bar{\mathbf{T}}_i)\)
		\State Debiased reward: \(\mathbf{R}_i \leftarrow \mathbf{r}_i - \bar{\mathbf{r}}_i\)
		\State Find longest accepted sequence \(\widehat{\mathbf{S}}_i\) in \(\mathbf{T}_i\) and its length \(l_i\)
		\State Likelihood ratio: \(s_i \leftarrow \exp\big((\log \mathcal{M}(\widehat{\mathbf{S}}_i|\mathbf{x}_{1:i}) - \log \mathcal{M}_0(\widehat{\mathbf{S}}_i|\mathbf{x}_{1:i}))/l_i\big)\)
		\EndFor
		\State Standardize within group: \(\mathcal{A}_i \leftarrow \big(\mathbf{R}_i - \mathrm{mean}(\{\mathbf{R}_j\})\big)/\big(\mathrm{std}(\{\mathbf{R}_j\}) + \delta\big)\)
		\State Compute group loss: \(\mathcal{L}_{\mathrm{GTO}} \leftarrow -\tfrac{1}{m}\sum_{i\in\mathbf{G^{(k)}}} \min\big(s_i \mathcal{A}_i,\, \mathrm{clip}(s_i, 1-\epsilon, 1+\epsilon)\mathcal{A}_i\big)\)
		\EndFor
		\EndFor
		\State Update \(\mathcal{M}\) by minimizing \(\mathcal{L} = \mathcal{L}_{\mathrm{token}} + \omega \mathcal{L}_{\mathrm{GTO}}\)
		\EndFor
	\end{algorithmic}
\end{algorithm}

\section{Clarification of Baseline Methods}
\label{Clarification of Baseline Methods}
For EAGLE, EAGLE-2, EAGLE-3, HASS, GRIFFIN, Medusa and Hydra, we directly utilized the publicly released draft model parameters provided by the respective authors. For methods that do not require draft model training, such as PLD, Lookahead, and SPS, we evaluated performance using official code from their GitHub repositories.

\section{Training Overhead of GTO}
\label{sec:training-overhead}

\paragraph{Compute budget.}
All results were obtained on NVIDIA A100 80\,GB GPUs under mixed precision with ZeRO-2. The Phase-II GTO fine-tuning requires approximately
\emph{(i)} 200 GPU-hours for 8B models,
\emph{(ii)} 400 GPU-hours for 13B models, and
\emph{(iii)} 900 GPU-hours for 70B models.
These compute budgets cover end-to-end GTO training (including grouped tree construction and verification) and exclude any pretraining of the base or drafter models, as we fine-tune on publicly available pretrained drafters.

\paragraph{Why the overhead is worthwhile.}
\begin{itemize}
  \item \textbf{Model-agnostic and complementary.} \emph{GTO is model-agnostic and complementary to existing speculative decoding methods}: it can be directly fine-tuned on top of pretrained draft models (e.g., EAGLE-3, GRIFFIN) \textbf{without architectural changes} or modifications to the verification stack.
  \item \textbf{Amortized cost in deployment.} \emph{Train once, use everywhere}: the draft model is trained a single time, whereas decoding dominates the runtime in real-world deployments; the added training cost is therefore \textbf{amortized by improved inference efficiency}.
  \item \textbf{Measured gains.} In our experiments, GTO delivers \(\mathbf{>7\%}\) higher end-to-end speedup ratio than EAGLE-3, making the small additional training budget a \textbf{favorable trade-off} for latency-sensitive applications.
\end{itemize}

\section{Ablation on Tree Configuration}
\label{sec:ablation-tree-config}

In our primary experiments, we default to the standard EAGLE tree configuration (e.g., depth 7, 60 total tokens). This tree structure has emerged as the de facto standard in modern speculative decoding research and is widely adopted by recent state-of-the-art frameworks (e.g., EAGLE-3, HASS, GRIFFIN). By deliberately adopting this configuration, we ensure that our results are strictly comparable to established baselines, accurately reflect realistic deployment settings, and maximize practical relevance for the community. 

\paragraph{Tree-Agnostic Formulation.}
Importantly, GTO is not inherently constrained to any specific tree layout. Methodologically, GTO optimizes the drafter to maximize the expected acceptance length under speculative decoding. Because the training procedure learns underlying acceptance-related token distribution patterns rather than memorizing a fixed tree shape, GTO is conceptually compatible with any tree configuration that satisfies two basic conditions: (i) the drafter can successfully construct the tree, and (ii) a tree-level reward can be computed via target model verification. This design ensures that GTO's optimization principles generalize well beyond the default EAGLE layout.

\paragraph{Interpolation and Extrapolation Performance.}
To empirically validate GTO's robustness to varying tree structures, we conduct an ablation study exploring the extrapolation and interpolation of the tree configuration at inference time. Specifically, we take a drafter trained exclusively on the standard EAGLE tree (depth 7, 60 total tokens) and evaluate it using modified configurations during inference. We vary the tree depth $\in \{6, 7, 8\}$ and the total drafted tokens per step $\in \{50, 60, 70\}$. The comprehensive results across different benchmarks and temperatures are presented in Table~\ref{ablation-tree}. We observe four key findings:

\begin{itemize}
    \item \textbf{Consistent Performance:} GTO maintains strong speedup ($SR$) and acceptance rate ($\tau$) improvements across all 9 tested configurations, demonstrating high robustness to both depth and width variations.
    \item \textbf{Graceful Interpolation:} Configurations closely interpolating the training setting exhibit minimal performance degradation, with speedup varying by only $1\%$--$3\%$ from the optimal setting.
    \item \textbf{Effective Extrapolation:} When extrapolating to depth 8 (beyond the training depth of 7), GTO still achieves competitive or even superior performance. This indicates that the token-level alignment patterns learned by GTO transfer effectively to deeper and larger trees.
    \item \textbf{Stability Across Conditions:} The robustness holds consistently across MT-bench, HumanEval, and GSM8K, as well as under both greedy ($T=0$) and sampling ($T>0$) decoding settings. This confirms that the generalization is neither task-specific nor decoding-strategy-specific.
\end{itemize}

In conclusion, these results demonstrate that GTO does not overfit to a specific tree configuration during the training phase. The learned alignment strategies remain highly effective when the tree structure is modified at inference time. Consequently, GTO fully supports dynamic tree configuration updates during inference while maintaining, and in some cases even improving, overall speculative decoding performance.

\begin{table}[t] 
\caption{Ablation of Tree Configuration \(m\)  on LLaMA-3.1 8B.} 
\label{ablation-tree}
\vspace{-2mm}
\begin{center}
\begin{small}
\resizebox{1.00\columnwidth}{!}{
\setlength{\tabcolsep}{5pt}
\begin{tabular}{ll||cccccc|cc||cccccc|cc}
        \toprule
        & & \multicolumn{8}{c||}{\textbf{Temperature = 0}} & \multicolumn{8}{c}{\textbf{Temperature = 1}} \\
				\cmidrule{3-18}
				\textbf{Depth} & \textbf{Total Tokens} & \multicolumn{2}{c}{\textbf{MT-bench}} & \multicolumn{2}{c}{\textbf{HumanEval}} & \multicolumn{2}{c|}{\textbf{GSM8K}} & \multicolumn{2}{c||}{\textbf{Average}} & \multicolumn{2}{c}{\textbf{MT-bench}} & \multicolumn{2}{c}{\textbf{HumanEval}} & \multicolumn{2}{c|}{\textbf{GSM8K}} & \multicolumn{2}{c}{\textbf{Average}} \\
        \cmidrule{3-18}
        & & $SR \uparrow$ & $\tau \uparrow$  & $SR \uparrow$ &  $\tau \uparrow$  & $SR \uparrow$ &  $\tau \uparrow$ & $SR \uparrow$ &  $\tau \uparrow$ & $SR \uparrow$ &  $\tau \uparrow$ & $SR \uparrow$ &  $\tau \uparrow$ & $SR \uparrow$ &  $\tau \uparrow$ & $SR \uparrow$ &  $\tau \uparrow$ \\
        \midrule
        6 & 50 & 3.29 & 5.81 & 3.71 & 6.45 & 3.46 & 6.12 & 3.49 & 6.13 & 2.39 & 4.54 & 3.10 & 5.75 & 2.90 & 5.40 & 2.80 & 5.23 \\
        6 & 60 & 3.32 & 5.86 & 3.73 & 6.50 & 3.49 & 6.21 & 3.51 & 6.19 & 2.41 & 4.56 & 3.12 & 5.79 & 2.93 & 5.51 & 2.82 & 5.29 \\
        6 & 70 & 3.34 & 5.93 & 3.73 & 6.52 & 3.51 & 6.27 & 3.53 & 6.24 & 2.41 & 4.56 & 3.12 & 5.80 & 2.93 & 5.52 & 2.82 & 5.29 \\
        7 & 50 & 3.39 & 6.05 & 3.95 & 6.87 & 3.56 & 6.39 & 3.63 & 6.44 & 2.44 & 4.60 & 3.15 & 5.86 & 2.83 & 5.30 & 2.80 & 5.25 \\
        7 & 60 & 3.44 & 6.15 & 4.17 & 6.95 & 3.59 & 6.47 & 3.73 & 6.52 & 2.49 & 4.70 & 3.17 & 5.92 & 3.02 & 5.75 & 2.89 & 5.46 \\
        7 & 70 & 3.46 & 6.23 & 4.17 & 7.00 & 3.63 & 6.57 & 3.76 & 6.60 & 2.51 & 4.74 & 3.24 & 6.10 & 3.10 & 5.86 & 2.95 & 5.57 \\
        8 & 50 & 3.56 & 6.22 & 4.22 & 7.24 & 3.61 & 6.59 & 3.80 & 6.68 & 2.49 & 4.72 & 3.29 & 6.34 & 2.80 & 5.24 & 2.86 & 5.43 \\
        8 & 60 & 3.54 & 6.34 & 4.24 & 7.33 & 3.66 & 6.69 & 3.81 & 6.79 & 2.46 & 4.73 & 3.20 & 6.12 & 2.95 & 5.56 & 2.87 & 5.47 \\
        8 & 70 & 3.49 & 6.43 & 4.20 & 7.39 & 3.71 & 6.83 & 3.80 & 6.88 & 2.46 & 4.74 & 3.22 & 6.28 & 2.78 & 5.26 & 2.82 & 5.43 \\

        \bottomrule
    \end{tabular}
}
\end{small}
\end{center}
\vspace{-5mm}
\end{table}

\section{LLM Usage Statement}
Large language models were used minimally for proofreading and grammar checking. The research ideas, methodology, experiments, and analysis were entirely conceived and conducted by the authors.

\end{document}